\algnewcommand\algorithmicforeach{\textbf{for each}}
\newcommand{\lee}{\leqslant}
\newcommand{\gee}{\geqslant}
\renewcommand{\leq}{\lee}
\renewcommand{\le}{\lee}
\renewcommand{\geq}{\gee}
\newcommand{\REM}[1]{}
\newcommand{\G}{\ensuremath{{\cal G}}}
\newtheorem{theorem}{Theorem}
\newtheorem{lemma}{Lemma}
\DeclareMathOperator\outdeg{out-deg}
\DeclareMathOperator\vlr{VLR}
\DeclareMathOperator\vs{VS}
\newcommand{\edges}[1]{#1.\textit{out-edges}}
\DeclareMathOperator\angl{angle}
\newcommand{\ang}[1]{\angl(#1)}
\DeclareMathOperator\dista{dist}
\newcommand{\dist}[1]{\dista(#1)}
\DeclareMathOperator\prede{pred}
\newcommand{\pred}[1]{\prede(#1)}
\newcommand{\preds}[1]{#1.\textit{preds}}
\title{RCS: a fast path planning algorithm for\\ Unmanned Aerial Vehicles}
\date{April 19, 2020}
\author{
	Mohammad Reza Ranjbar Divkoti \\
	Department of Computer Engineering\\ Ferdowsi University of Mashhad\\ Mashhad, Iran \\
	\texttt{mohammadreza.ranjbardivkoti@mail.um.ac.ir} \\
	\And
	Mostafa Nouri-Baygi\thanks{Corresponding author} \\
	Department of Computer Engineering\\ Ferdowsi University of Mashhad\\ Mashhad, Iran \\
	\texttt{nouribaygi@um.ac.ir} \\
}
\begin{document}
\maketitle

\begin{abstract}
Path planning is a major problem in autonomous vehicles. In recent years, with the increase in applications of Unmanned Aerial Vehicles (UAVs), one of the main challenges is path planning, particularly in adversarial environments. In this paper, we consider the problem of planning a collision-free path for a UAV in a polygonal domain from a source point to a target point. Based on the characteristics of UAVs, we assume two basic limitations on the generated paths: an upper bound on the turning angle at each turning point (maximum turning angle) and a lower bound on the distance between two consecutive turns (minimum route leg length).

We describe an algorithm that runs in $O(n^2 \log n)$ time and finds a feasible path in accordance with the above limitations, where $n$ is the number of obstacle vertices. As shown by experiments, the output of the algorithm is much close to the shortest path with this requirements. We further demonstrate how to decompose the algorithm into two phases, a preprocessing and a query phase. In this way, given a fixed start point and a set of obstacles, we can preprocess a data structure of size $O(n^2)$ in $O(n^2 \log n)$ time, such that for any query target point we can find a path with the given requirements in $O(n \log n)$ time. Finally, we modify the algorithm to find a feasible (almost shortest) path that reach the target point within a given range of directions.
\end{abstract}

\keywords{Path planning \and Unmanned aerial vehicles \and Polygonal domains \and Maximum turning angle \and Minimum route leg length}

\section{Introduction}

An Unmanned Aerial Vehicle (UAV), is a flying machine without a human pilot. Due to its advantages in military, civil and industrial markets, they are largely employed in situations dangerous for human life~\cite{sarris2001survey,herrick2000development,spraying2002commercial}. Since traditional remote piloting are not adequate for current complex missions, autonomous path planning are utilized extensively for UAVs~\cite{goerzen2010survey}.

Path planning is the problem of finding a feasible path from a given source to a given target for a moving object, that optimizes a desirable objective function and taking into account different constraints. The constraints are of various types, for example, non-accessible points in the space, velocity and acceleration constraints, uncertainty about the environment and so on. This problem in its generic form is NP-hard~\cite{canny1987new} and has polynomial time algorithms only in some special cases~\cite{lavalle2006planning}.

This problem has been extensively studied in robotics and various algorithms are presented for different forms of it. In general, the problem of path planning in robotics is more complex than in UAVs because of high degrees of freedom. For example, articulated robots can move their arms, or the shapes of some robots are too complex, and the robots must move in several directions to be able to pass through some regions in the environment (Piano Mover's Problem~\cite{reif1979complexity}).

In contrast to the robot path planning, the UAV path planning has several characteristics. The three major differences in path planning for UAVs are as follows. First, due to the small size of the flying object compared to the space, we usually model a UAV as a moving point in the space. This makes the problem of path planning much simpler.

Second, because of the high speed of UAVs, it is not possible for a UAV to rotate by a sharp angle in its flight path~\cite{zheng2005evolutionary,szczerba2000robust}. This is completely different from a robot's movement in which the robot can slow down or even stop to rotate. Because of this, the flight path of a UAV must be smooth and the path planner should avoid sharp turns. This limitation makes the path planning for UAVs different compared to the conventional path planning for robots, and increases the complexity of the path planner. The complexity magnified when the number of obstacles increases and the path planner may have to create a longer zigzag path with many rotations in order to make a smooth path. In addition to the limit on the turning angles, usually a UAV cannot have two consecutive turns in a short time. This enforces the path planner to make enough distance between two successive turns~\cite{zheng2005evolutionary,szczerba2000robust}.

The third difference in the UAV path planning is the ability of a UAV to change its flight height. Robots usually move on a surface, so the problem of robot path planning is normally defined on the plane. But given the fact that UAVs fly in the space, and sometimes there is a need for a change in altitude to avoid obstacles (for example, a UAV flying over a terrain, or a cruise missile over a sea near scattered islands). In these cases, the problem of designing a path should be solved in three-dimensional space which introduces new complexities~\cite{mittal2007three,roberge2013comparison,yang20083d}.

In this paper, we take the first and second characteristics of UAVs path planning into account, and, as will be explained later, design a smooth path for a moving point among polygonal obstacles. However, in order to simplify explaining the algorithm in this paper, we choose the two-dimensional space. It is not too hard to generalize the approach to three-dimensional space.

The UAVs path planning algorithms are divided into two general categories, offline and online, based on the knowledge of the planner about the environment. A path planning algorithm is called offline, if the designer has complete information about the environment and obstacles in it~\cite{mittal2007three, nikolos2003evolutionary, xue2014offline}. On the other hand, an online algorithm knows little or nothing at all about the environment in which the movement will take place~\cite{wen2015uav, wen2017online, nikolos2003evolutionary}. In the latter case, the path planning is done while the UAV moves and based on the information we get from the sensors, the path is corrected. In this paper we choose to work on offline algorithms.

In terms of the number of reported paths, the algorithms can be divided into three categories. In the first category, the path planning is performed only for one device. Most algorithms are included in this category. In the second category, the path planning is done simultaneously for a set of UAVs. In some applications, for example a salvo attack against a warship, in order to prevent air defence systems activation, target missiles must simultaneously hit the same target with different angles. In this case, the path planner algorithm finds the best paths for several UAVs all at once~\cite{yao2016cooperative, sahingoz2014generation, kothari2009multi, nedjati2016complete}. In the third category, the path planning is performed for different targets, while the environment and the source point are fixed. This type of algorithms are useful when the target point is moving and we want to find the best time/place of the target to reach it, by taking advantage of fixed environment to speed up the path planning.

In this paper, we first propose a path planning algorithm for a single UAV. Later, with some changes, we split the algorithm into a preprocessing phase and a query phase in order to be able to find paths for multiple targets in a much shorter time.

In terms of the shape of the path, the algorithms are divided into two classes. In the first one, the output of the algorithm is composed of straight line segments that are connected together~\cite{fu2012phase,szczerba2000robust,zheng2005evolutionary}. In the second class, the output consists of straight line segments and more complex curves such as arcs~\cite{roberge2013comparison}, Bezier curves~\cite{yang2015generation,sahingoz2014generation,yang20083d} and B-Spline curves~\cite{mittal2007three,nikolos2007uav}. The advantage of the former is the simplicity of the path and the speed of the path planner, while the latter is better to adapt to the rotation limitations of UAVs and not having sharp angle rotations. In contrast, the latter algorithms have high execution time and are rarely applicable in practice.

Based on the limitations described above that we consider for the path, we have the simplicity and speed of line segment path planners, while at the same time smoother paths are obtained that are traversable by UAVs.

The two limitations used in this paper, were also previously considered by other researchers. Szczerba \textit{et al.}~\cite{szczerba2000robust} were the first who described these two limitations. They used a heuristic search algorithm to find a path in the two-dimensional space with the above limitations that has length not more than a fixed given value and approaching the target from a desirable direction (approximately). The main problem with their algorithm, as will be explained in Section~\ref{Algorithm_Evaluation}, is its running time. Later Zheng \textit{et al.}~\cite{zheng2003real} and Zheng \textit{et al.}~\cite{zheng2005evolutionary} considered the problem in the three-dimensional space and added two new limitations to the path, namely minimum flying height and maximum climbing/diving angle. They used an evolutionary algorithm to solve the problem, which as Szczerba \textit{et al.}'s algorithm~\cite{szczerba2000robust} has high running time when the complexity of the environment grows.

\subsection{Our results}
We can summarize our results as follows:

\begin{itemize}
	\item We present an offline, graph based path planning algorithm, called RCS. We first make two assumptions about the required path, and then create a graph which models paths with those requirements, and finally apply a modified Dijkstra's algorithm~\cite{cormen2009introduction} on this graph to find the shortest feasible path from the source node to the target node. Because we find the shortest path with these properties, as we show in experimental results, the length of the result of the algorithm is much close to the length of the optimum path without the assumptions.
	
	\item We prove an upper bound on the time and space complexity of RCS. We show that if all inputs of the problem is given at the same time to the algorithm, we can solve the problem in $O(n^2 \log n)$ time and $O(n^2)$ space. In these upper bounds $n$ is the number of obstacle vertices.
	
	\item In another version of the problem, where several UAV's need to be launched from a fixed source, or we need to find the best time/path to reach a moving target point, we divide the RCS algorithm into two phases, such that given the set of obstacles and the source point, we preprocess a data structure of O$(n^2)$ size in $O(n^2 \log n)$ time, and for any query target point we can find the result in $O(n \log n)$ time.
	
	\item We change the RCS algorithm in such a way to find the best path that approaches the target point from a direction within a specified range of directions. The running time and the space usage of the algorithm do not change.
\end{itemize}

A preliminary version of this paper is presented in the 7th International Conference on Computer and Knowledge Engineering~\cite{ranjbar2017path}.
In this paper the running time of the algorithm is extremely improved. Furthermore, in this paper we give a complete literature review, consider an extension of the problem in Section~\ref{sec_extension_direction}, and present a thorough evaluation of the algorithms.

The remaining of the paper is organized as follows. The problem is formally described in Section~\ref{Problem_description}. In Section~\ref{Preliminaries}, we discuss the geometrical concepts needed to solve the problem. We present the algorithm to solve the problem in Section~\ref{Problem_solving_algorithm}. The extensions of the algorithm are described in Section~\ref{sec_extension} and the experimental evaluations are presented in Section~\ref{Algorithm_Evaluation}. Section~\ref{Conclusion} concludes the paper.

\section{Problem Description} \label{Problem_description}
We here formally define the problem. In the plane, a set $O=\{P_1,P_2,\dots,P_h\}$ of $h$ disjoint simple polygons, called obstacles, which totally have $n$  vertices, a source point $s$, a target point $t$ and two constant values $l$ and $\alpha$ are given. Our goal is to find a path in the exterior space of polygons of $O$, called the free space, from $s$ to $t$ with the following properties:
\begin{itemize}
	\item \textit{Minimum route leg length}: the path consists of straight line segments, each of length at least $l$.
	\item \textit{Connectedness}: the consecutive segments are connected at turning points.
	\item \textit{Maximum turning angle}: the turning angles at the turning points are at most $\alpha$.
\end{itemize}

From now on, we will refer to the above three properties as simply the \emph{path requirements}. Figure~\ref{defineproblem} depicts a set $O= \{P_1,P_2,P_3 \} $ of three obstacles, with $n=14$ vertices, source point $s$ and target point $t$. Each path from $s$ to $t$ is composed of a chain of segments. The illustrated path $\{e_1,e_2,e_3\}$ in the figure satisfies the path requirements if $l \le |e_1|,|e_2|,|e_3|$ and $\theta_1,\theta_2 \le \alpha$.

\begin{figure}[!t]
	\centering
	\includegraphics[width=2in]{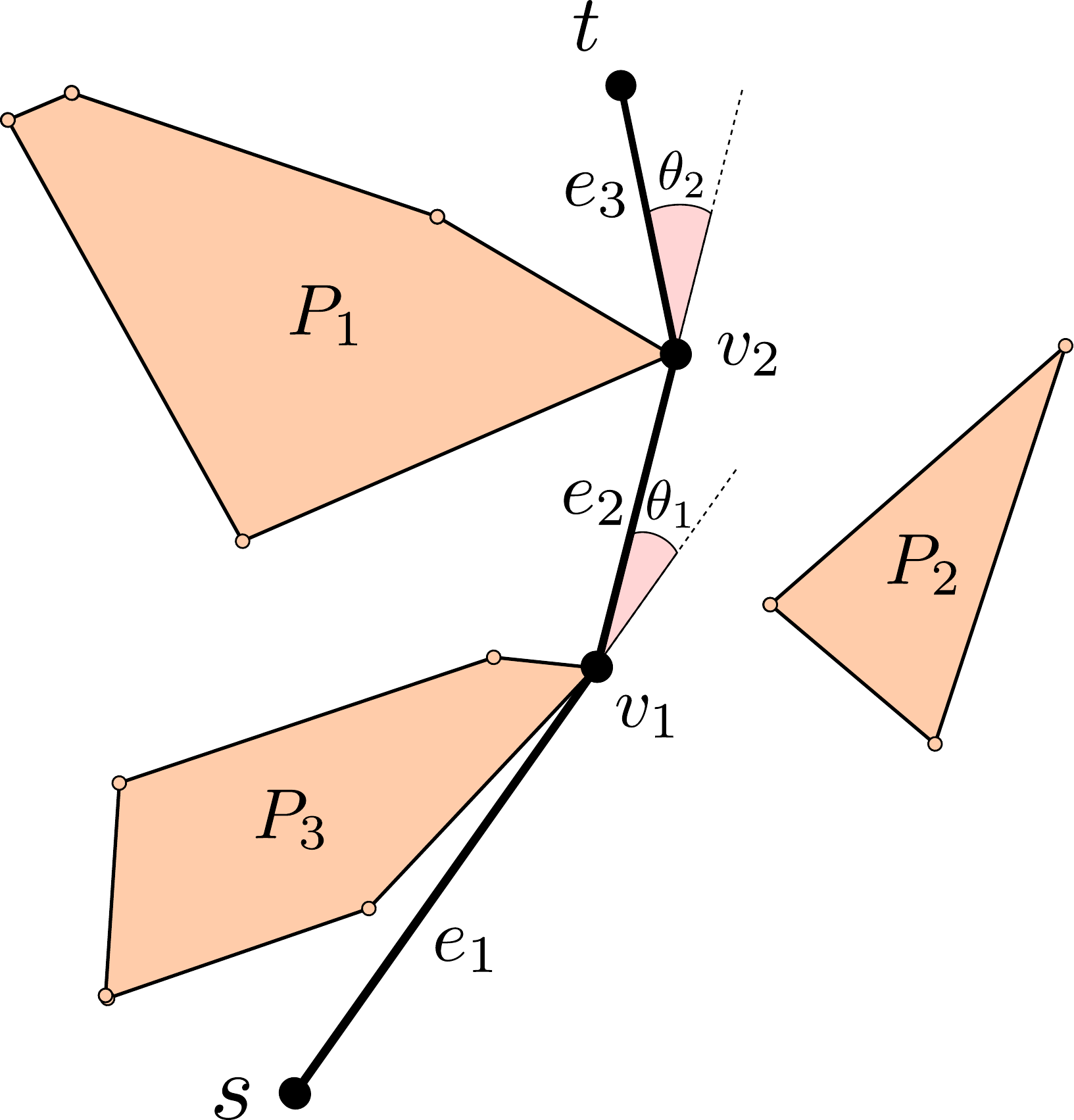}
	\caption{Path planning in the presence of polygon obstacles, with \emph{connectedness}, \emph{minimum route leg length}, and \emph{maximum turning angle} properties.}
	\label{defineproblem}
\end{figure}

\section{Preliminaries} \label{Preliminaries}
In this section, we introduce some geometric tools used for solving the problem. In Section \ref{Regular_chain_of_segments} we define a regular chain of segments and in Section \ref{Ray_shooting} the ray shooting data structure is briefly explained.

\subsection{Regular chain of segments} \label{Regular_chain_of_segments}
In our problem we consider only paths consisting of a sequence of connected line segments. We call such a sequence a \textit{chain of segments}. Each two consecutive segments have a common endpoint, called the \textit{turning point}. The angle between a segment and the extension of the previous segment in the turning point is called the \textit{turning angle}. (Figure \ref{chain_of_edge})

We call a chain of segments with all turning angles equal to $\alpha$ and equal leg lengths a \emph{regular chain of segments}. Intuitively, it is part of a regular polygon with exterior angle $\alpha$.

\begin{figure}[!t]
	\centering
	\includegraphics[width=2in]{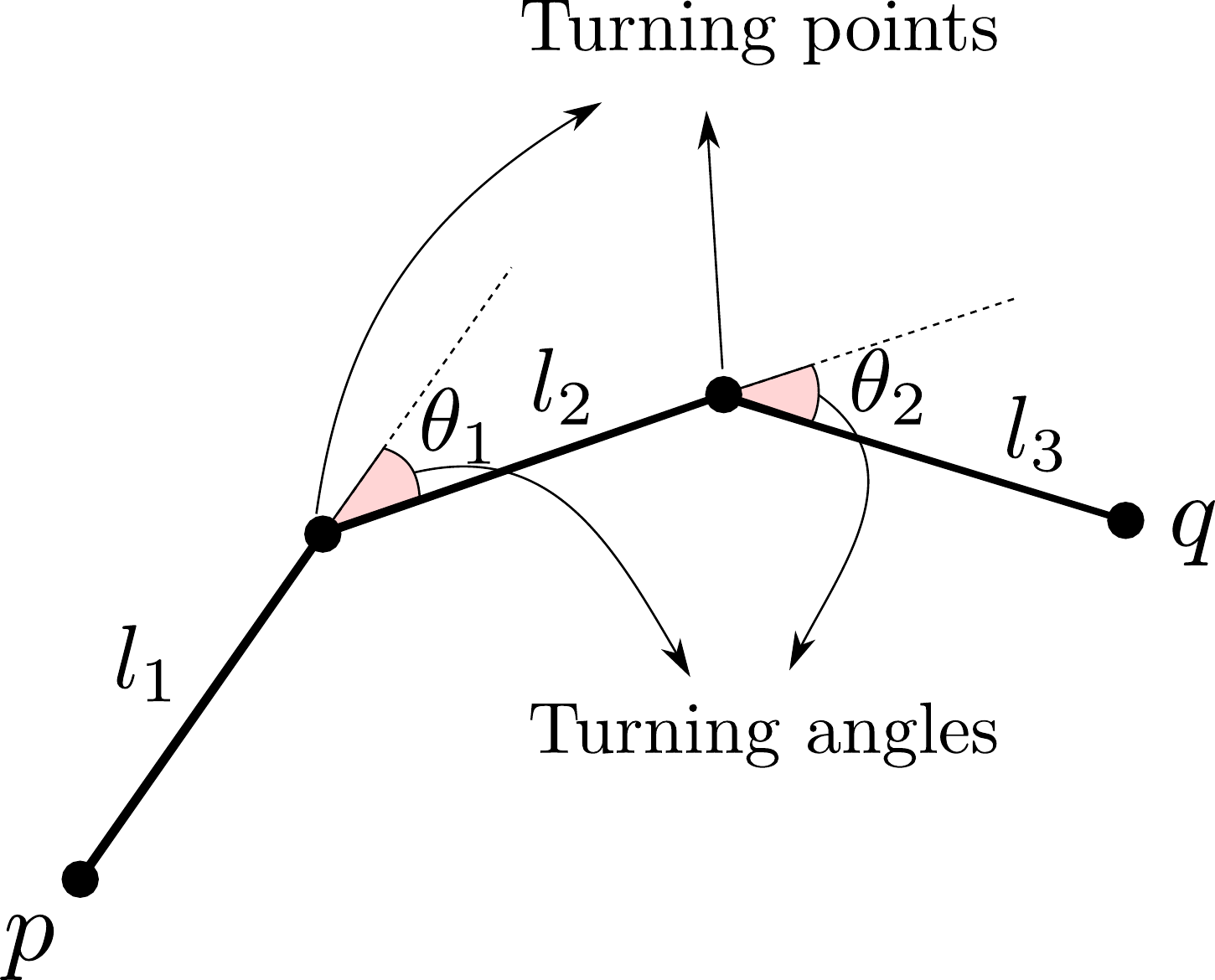}
	\caption{chain of segments from $p$ to $q$ with condition $l_1,l_2,l_3 \geq l$,  $\theta_1,\theta_2 \leq \alpha$.}
	\label{chain_of_edge}
\end{figure}

We here calculate the position of vertices of a regular chain of segments with different number of turning points, while the position of the start and the end points are known.

\begin{figure}[!t]
	\centering
	\includegraphics[width=1.5in]{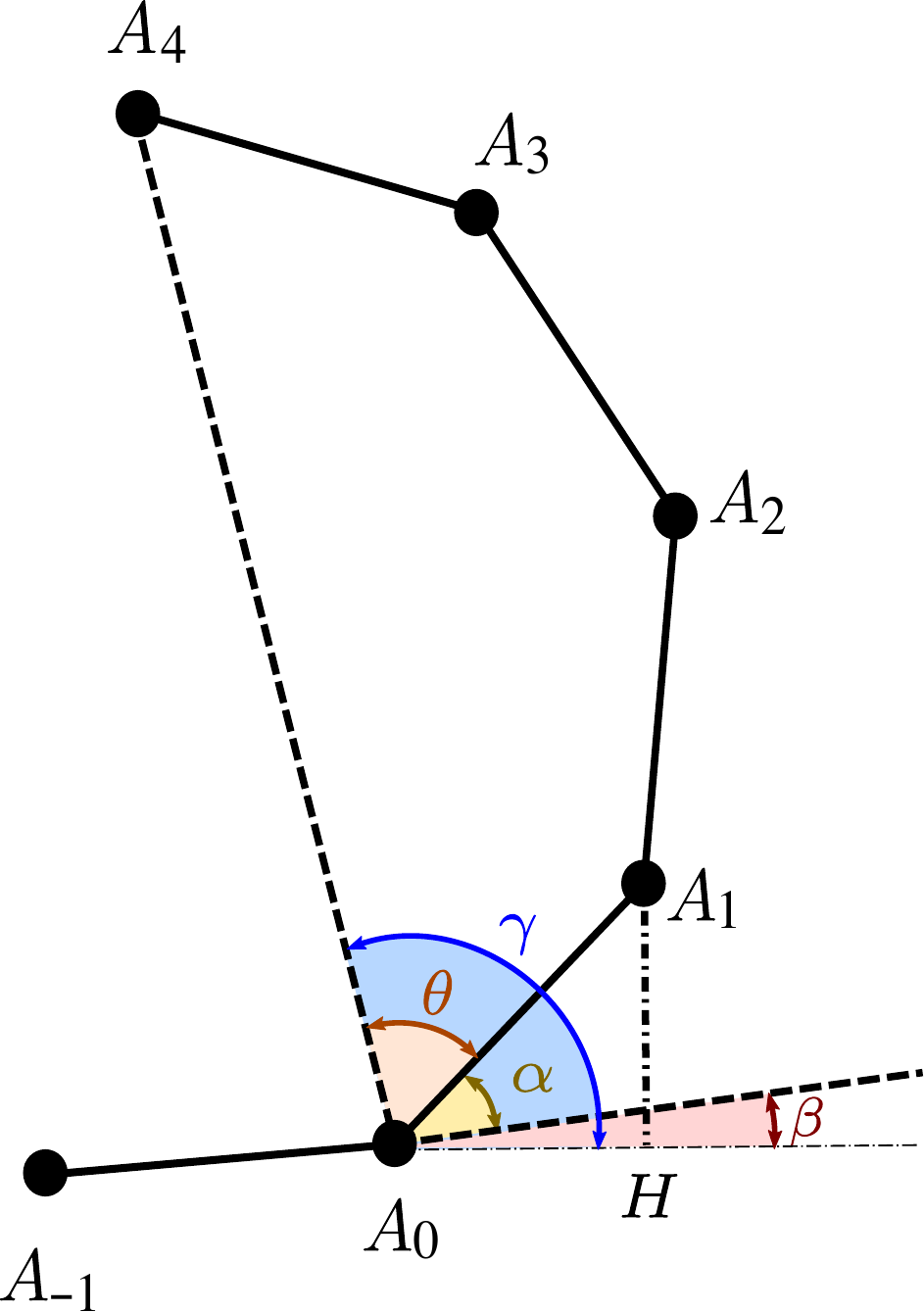}
	\caption{A regular chain of segments from point $A_0$ to $A_4$ with three turning points.}
	\label{chain}
\end{figure}

Let $C$ denote a regular chain of segments with $A_0$ as the starting point, $A_{k+1}$ as the end point, and with $k$ turning points. We assume the $x$-axis is the rightward horizontal line through the origin and the $y$-axis the upward vertical line through the origin. Let $A_i$, $1 \leq i \leq k$, be the $i^{\text{\tiny th}}$ turning point of $C$. Let $e$ denote the length of the segments of $C$. For ease of illustration, we consider $A_0$ as an imaginary turning point and denote its previous point by $A_{-1}$ (Figure \ref{chain}).  Let $\beta$ denote the angle between the extension of $\overline{A_{-1} A_0}$ and the $x$-axis. Since $C$ is a regular chain of segments, the angle between the extension of $\overline{A_{-1} A_0}$ and $\overline{A_0 A_1}$ is $\alpha$. Let the angle between $\overline{A_0 A_1}$ and $\overline{A_0 A_{k+1}}$ (the direct line segment from the starting point to the end point) be denoted by $\theta$.

Draw a line from $A_0$ parallel to the $x$-axis and project $A_1$ on this line. Let $H$ denote the projection point (Figure \ref{chain}). In the right triangle $\vartriangle A_0HA_1$ we have
$$x_{A_1} - x_{A_0} =|A_0 A_1 |  \cos{(\alpha+ \beta)},$$
$$y_{A_1} - y_{A_0} =|A_0 A_1 |  \sin{(\alpha+ \beta)}.$$
Similarly, we can draw a horizontal line through $A_1$ and project $A_2$ on this line and get the following equations
$$x_{A_2} - x_{A_1} =|A_1 A_2 |  \cos{(2\alpha+ \beta)},$$
$$y_{A_2} - y_{A_1} =|A_1 A_2 |  \sin{(2\alpha+ \beta)}.$$
As it follows, the position of each point $A_i$, can be calculated from the following recurrence relations
$$x_{A_i} - x_{A_{i-1}} =|A_{i-1} A_i |  \cos{(i\alpha+ \beta)},$$
$$y_{A_i} - y_{A_{i-1}} =|A_{i-1} A_i |  \sin{(i\alpha+ \beta)}.$$
Since $e=|A_{i-1} A_i |$, for all $1 \leq i \leq k+1$, we solve the above recurrence relations as follows
\begin{align}
x_{A_j} = e \sum_{i=1}^{j}{\cos{(i\alpha+ \beta)}} + x_{A_0},\label{eq_turn_coordinate_x} \\
y_{A_j} = e \sum_{i=1}^{j}{\sin{(i\alpha+ \beta)}} + y_{A_0},\label{eq_turn_coordinate_y}
\end{align}
where $j=1,\dots,k+1$.

In a simple polygon, the sum of the interior angles with $d$ vertices is equal to $(d-2)\pi$. If we connect $A_0$ to $A_{k+1}$, we have a simple polygon. In this polygon, the interior angle at $A_0$ and $A_{k+1}$ are equal, because $C$ is part of a regular polygon. Let $d$ be the number of vertices of the polygon. Since $k$ is the number of turning points in $C$, we have $k=d-2$ and so the sum of the interior angles of the simple polygon is $(d-2)\pi=k\pi$. The interior angle of the first and the last vertices in the chain is $\theta$ and the other angles are $\pi - \alpha$. Therefore we have $k(\pi-\alpha)+2 \theta=k\pi,$
or equivalently $\theta=\frac{k\alpha}{2}.$

Let the angle between $\overline{A_0 A_{k+1}}$ and the $x$-axis be denoted by $\gamma$. If we know the position of $A_0$ and $A_{k+1}$, we have the value of $\gamma$. Since $\gamma= \beta+ \alpha+ \theta$, we can rewrite this equation to have $\beta$ on the left hand side as $\beta= \gamma-(\alpha+\theta)$. Therefore, we have the value of $\beta$. If we further know the number of turning points of $C$, i.e. $k$, the length of segments of $C$ can be computed with each of the following equations
$$e = \frac{x_{A_{k+1}} - x_{A_0}}{\sum_{i=1}^{k+1} \cos{(i\alpha + \beta)}} = \frac{y_{A_{k+1}} - y_{A_0}}{\sum_{i=1}^{k+1} \sin{(i\alpha + \beta)}}.$$
Obtaining the value of $e$, the position of each vertex in the chain will be identified by Equations~\ref{eq_turn_coordinate_x} and \ref{eq_turn_coordinate_y}.

With the above method, given the starting and the end points and the number of turning points, we can easily determine the position of the turning points of the regular chain of segments.

\subsection{Ray shooting} \label{Ray_shooting}
One of the problems we encounter for path planning is to recognize whether a segment has any intersection with a set of segments. Formally, a set of segments $S=\{s_1,s_2,\dots,s_n\}$ are given. We are asked if segment $t$ intersects at least one of the segments of $S$.

It is obvious that this problem could be solved by checking the intersection of $t$ with each segment of $S$ in $\Theta(n)$ total time. But, if the problem must be solved a lot of times for a fixed $S$, an algorithm with less query time is more favorable.

We use a ray shooting algorithm for this problem. In the ray shooting problem, a set of segments $S$ is known. We need to preprocess $S$ such that upon receiving a query ray (its start point and direction), the first intersection position of the ray with the segments of $S$ will be detected quickly. We should also detect the case in which the ray does not intersect any segment of $S$.

By the solution of the ray shooting problem, the problem of segment intersection with a set of segments will be solved as follow: First, we preprocess $S$ for ray shooting. During the query time by receiving segment $t$, we shoot a ray from one end-point of $t$ in the direction of the other end-point. If the first intersection point of the ray with $S$ is between the two end-points of $t$, the answer to our solution is positive, that is, $t$ has an intersection with $S$. On the other hand, if the intersection point is not between the end-points of $t$ or there is not any intersection at all, the answer to our problem is negative.

There are many different algorithms for the ray shooting problem. But an old algorithm due to Chazelle~\cite{Chazelle86} is well suited to our purpose. In this algorithm, after preprocessing in $O(n^2 \log n)$ time and using $O(n^2)$ space, the ray shooting queries are answered in $O(\log{n})$ time.

\section{RCS: Path Planning Algorithm} \label{Problem_solving_algorithm}
In this section, we describe our algorithm for finding a path with the given requirements. Since we prefer to make the path smooth, we try to spread out the break points along the path evenly. Therefore, we use regular chains of segments as \emph{sub-paths} when moving from one obstacle vertex to another one, and so we call our algorithm \emph{Regular Chains of Segments Path Planner}, or simply \emph{RCS}.

Let $u$ and $v$ be two obstacle vertices. We want to find all sub-paths in the free space from $u$ to $v$ not passing through any other obstacle vertices. To do so, we examine all regular chains of segments with different number of break points starting at $u$ and ending at $v$. If all edges of a chain do not intersect any edges of the obstacles, then the chain is considered as a \emph{valid} sub-path from $u$ to $v$. In other words, we first look at the regular chain of segments without any break point, that is the straight line segment from $u$ to $v$. If the only edge of this chain does not intersect any obstacle then we consider this chain as a valid path from $u$ to $v$. Similarly, we examine the regular chain of segments with one break point, two break points, and so on. We continue this process until the distance between two adjacent break points in the chain gets smaller than $l$. If the first and the last points of a regular chain are fixed, the distance between two consecutive break points decreases when the number of break points increases (the last result of Section~\ref{Regular_chain_of_segments}). Therefore there are a limited number of valid regular chains of segments from $u$ to $v$.

Assume the moving object follows a given regular chain of segments from $u$ to $v$ when traveling from $s$ to $t$. Because of the \emph{maximum turning angle} property, when the object moves away from $v$, its \emph{leaving direction} must be within a fixed range, determined by the direction it enters $v$. This observation is depicted in Figure~\ref{fig_leaving_direction}. Following chain $c_0$, where the object moves from $u$ directly towards $v$, it can leave $v$ in the same direction and without any rotation at $v$, or rotate by an angle at most equal to $\alpha$, clockwise or counter-clockwise. Therefore, the object leaves $v$ in a direction in range $[\theta_1, \theta_2 = \theta_1 + 2\alpha]$. Similarly, for chain $c_1$ with seven break points, the valid leaving direction is in range $[\phi_1, \phi_2 = \phi_1 + 2\alpha]$; $\phi_1$ is the angle of the path if the object rotates at $v$ by an angle equal to $\alpha$ clockwise, and $\phi_2$ is the angle of the path when the object rotates by an angle equal to $\alpha$ counter-clockwise. We call this range of directions for a regular chain of segments $c$, the valid leave range of $c$, and denote it by $\vlr(c)$.

\begin{figure}[!t]
	\centering
	\includegraphics[width=2.5in]{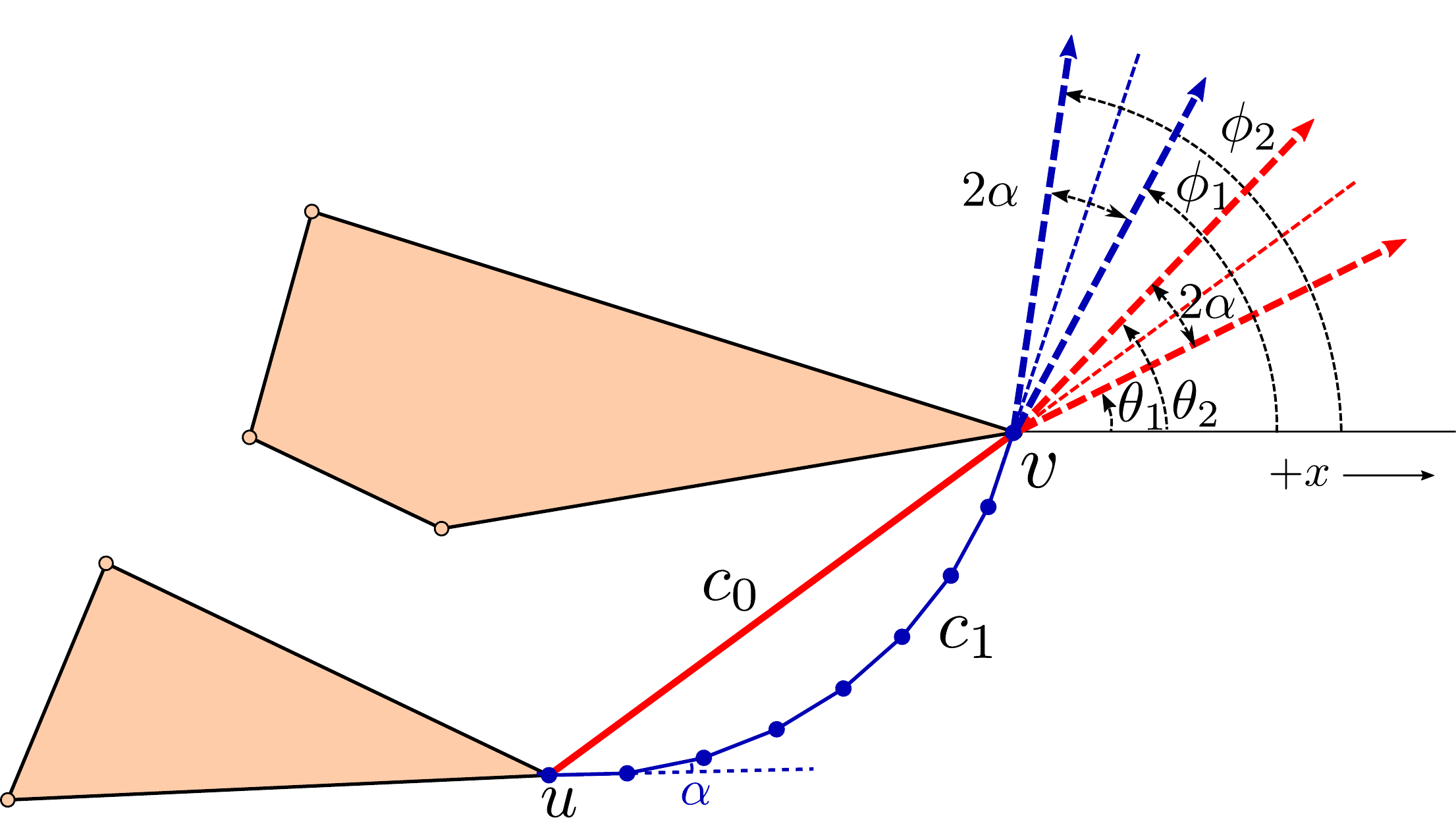}
	\caption{Following chain $c_0$, where the object moves directly towards $v$, it can leave $v$ in a direction in range $\vlr(c_0)=[\theta_1, \theta_2]$, while for chain $c_1$ the valid leaving range of directions is $\vlr(c_1)=[\phi_1, \phi_2]$.}
	\label{fig_leaving_direction}
\end{figure}

\subsection{Graph construction}\label{sec_graph_construction}
The key idea of our approach is to construct a weighted directed multigraph $\G=(V_{\G}, E_{\G})$, such that any path from the starting point $s$ to the target point $t$ in the original configuration, consisting only of regular chains of segments, corresponds to a path between analogous nodes in $\G$, of the same length. The problem will then reduce to finding the shortest path in the graph.

Graph $\G$ is constructed as follows. We add the starting point $s$ and the target point $t$ to the set of obstacle vertices. For each obstacle vertex $v$, we add a corresponding node to $\G$. For each pair of nodes $u, v$ of $\G$, and for each valid regular chain of segments $c$ between the corresponding vertices, we add two directed edges, one from $u$ to $v$, and the other from $v$ to $u$. We add two directed edges because $c$ does not have a direction and the object can follow $c$ in both directions, and each edge represents one of the directions. We say edge $e = (u, v)$ is incident from $u$ and incident to $v$, which means $u$ (resp. $v$) is the head (resp. tail) of $e$. The weight of $e$, denoted by $e.w$ is equal to the length of $c$. In addition, we store a reference to $c$, denoted by $e.c$, for further uses. It is possible to have more than one valid regular chain of segments between two obstacle vertices. So, there may be multiple edges between two nodes of $\G$, and as we indicated before, $\G$ is a multigraph.

\begin{lemma}\label{lem_graph_complexity}
For a polygonal domain of $n$ vertices, the number of nodes of $\G$, $|V_{\G}|$, is $n+2$ and the number of edges, $|E_{\G}|$, is $O(n^2)$. We can construct $\G$ in $O(n^2 \log n)$ time.
\end{lemma}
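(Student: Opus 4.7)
The plan is to handle the three claims (node count, edge count, construction time) separately, in increasing order of difficulty. The node count is essentially by definition: $V_{\G}$ consists of the $n$ obstacle vertices together with $s$ and $t$, hence $|V_{\G}| = n+2$.

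For the edge count, I would first prove the key combinatorial fact that for any fixed pair of vertices $(u,v)$, only $O(1)$ valid regular chains join them. Using the setup from Section~\ref{Regular_chain_of_segments}, a regular chain from $u$ to $v$ with $k$ turning points is inscribed in a regular polygon of exterior angle $\alpha$, and a short computation with Equations~\ref{eq_turn_coordinate_x} and~\ref{eq_turn_coordinate_y} (equivalently, a direct chord/arc argument on the circumscribed circle) expresses the edge length as $e = |uv|\,\sin(\alpha/2)/\sin((k+1)\alpha/2)$. The requirement $e \geq l$, combined with the geometric constraint that the chain does not wrap around itself (i.e. $(k+1)\alpha < 2\pi$), bounds the admissible $k$ by a constant that depends only on $\alpha$; accounting for the two possible turning directions gives $O(1)$ candidate chains per pair, and summing over the $O(n^2)$ pairs yields $|E_{\G}| = O(n^2)$.

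For the construction time, I would first build Chazelle's ray-shooting structure (Section~\ref{Ray_shooting}) on the $O(n)$ obstacle edges in $O(n^2 \log n)$ preprocessing time and $O(n^2)$ space, so that a single segment-obstacle intersection query costs $O(\log n)$. Then I would iterate over all $O(n^2)$ ordered pairs $(u,v)$, and for each pair enumerate the $O(1)$ candidate regular chains, computing the turning points explicitly via Equations~\ref{eq_turn_coordinate_x} and~\ref{eq_turn_coordinate_y}. Each chain contains $O(1)$ segments, which are validated one ray-shooting query at a time; validated chains contribute their two directed edges to $\G$. The total work is dominated by $O(n^2)\cdot O(1)\cdot O(\log n) = O(n^2 \log n)$, matching the preprocessing bound.

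The main obstacle is the per-pair $O(1)$ bound on valid chains: the recurrence of Section~\ref{Regular_chain_of_segments} only gives an implicit formula for $e$ as a function of $k$, and $e$ is not monotone in $k$ once the chain wraps past a half-turn, so one has to invoke the regular-polygon structure globally (not just the per-segment recurrence) and argue that chains with $(k+1)\alpha$ too close to $2\pi$ either violate $e \geq l$ or are geometrically degenerate. Once this bound is in place, both the edge count and the time analysis follow cleanly from Chazelle's data structure.
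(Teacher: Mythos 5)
Your proposal is correct and follows essentially the same route as the paper: $|V_{\G}|=n+2$ by construction, $O(1)$ valid regular chains per vertex pair (hence $O(n^2)$ edges), and Chazelle's ray-shooting structure giving $O(\log n)$ per segment-validity test for an $O(n^2\log n)$ total construction time. The only difference is cosmetic: where you derive the per-pair constant via the chord formula $e=|uv|\sin(\alpha/2)/\sin((k+1)\alpha/2)$ together with $e\ge l$ and the no-wrap condition, the paper simply observes that a regular chain with exterior angle $\alpha$ is part of a regular polygon with $2\pi/\alpha$ vertices, so $k\le 2\pi/\alpha$ directly---your extra care about the non-monotonicity of $e$ in $k$ is a reasonable refinement but not needed for the stated bound.
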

\begin{proof}
For each obstacle vertex, we add an equivalent node to $\G$. Furthermore, we add two equivalent nodes for $s$ and $t$. Therefore, the number of nodes of $\G$ is $n+2$.

For each pair of vertices $u$ and $v$, we have at most $\frac{2\pi}{\alpha}$ valid regular chains of segments. This is because the number of break points in a regular polygon with exterior angle $\alpha$ is $\frac{2\pi}{\alpha}$. For each valid regular chain of segments, two edges is added to $\G$. In summary, the maximum number of edges is ${n \choose 2} \cdot 2 \cdot 2\pi/\alpha$ which is $O(n^2)$.

For constructing the graph, we first preprocess the set of obstacle edges for ray shooting, as described in Section~\ref{Ray_shooting}. There are $n$ segments, and the preprocessing takes $O(n^2 \log n)$ time.
Then, for each pair of obstacle vertices, and each possible regular chain of segments between them, we see whether it is a valid regular chain. The validity test is done by checking the intersection of segments of the chain with obstacle edges. If any segment of the chain has an intersection with an obstacle edge, the chain is not valid. Otherwise, it is a valid chain and we add the corresponding edges to $\G$. There are ${n \choose 2} \cdot 2\pi/\alpha$ possible chains, each consisting of at most $2\pi/\alpha$ segments, and executing a ray shooting query for each one takes $O({n \choose 2} \cdot (2\pi/\alpha)^2 \log n) = O(n^2 \log n)$ total time.
\end{proof}

The above bound for the number of edges of $\G$ is clearly the worst case upper bound. This is because in most cases obstacle edges block regular chains and preclude them from being valid.

\begin{lemma}\label{lem_shortest_path}
Any path in the polygonal domain from vertex $s$ to vertex $t$ of length $l$ that consists of only regular chains of segments has a corresponding path between the analogous nodes in graph $\G$ of length $l$.
\end{lemma}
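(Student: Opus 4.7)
The plan is a direct correspondence argument that tracks the decomposition of the physical path into the data recorded in $\G$. First, I would decompose the given path $\pi$ of length $l$ as $\pi = c_1 \circ c_2 \circ \cdots \circ c_m$, where each $c_i$ is a single regular chain of segments, and let $s = v_0, v_1, \ldots, v_m = t$ be the shared endpoints between consecutive chains. By the setup of Section~\ref{sec_graph_construction}, the only regular chains the algorithm considers run between $s$, $t$, and obstacle vertices, so each intermediate $v_i$ must be an obstacle vertex and therefore corresponds to a node of $\G$.

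Next, I would verify that each $c_i$ has a corresponding edge in $\G$. Since $\pi$ lies entirely in the free space, no segment of any $c_i$ can cross an obstacle edge; hence $c_i$ is a \emph{valid} regular chain in the sense of Section~\ref{sec_graph_construction}. Consequently, the construction procedure has added to $\G$ a directed edge $e_i$ from the node analogous to $v_{i-1}$ to the node analogous to $v_i$ with $e_i.c = c_i$ and $e_i.w = |c_i|$. Concatenating $e_1, \ldots, e_m$ then yields a directed walk from the analogue of $s$ to the analogue of $t$ whose total weight is $\sum_{i=1}^{m} e_i.w = \sum_{i=1}^{m} |c_i| = l$, matching the length of $\pi$.

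The step demanding the most care is the first one: the decomposition must be taken at precisely those break-points that are nodes of $\G$, i.e.\ obstacle vertices together with $s$ and $t$. Moreover, since $\G$ is a multigraph, between any fixed pair of nodes several parallel edges may coexist (one per valid regular chain), and the argument must single out the specific edge whose stored chain is exactly $c_i$, not merely some edge joining the correct pair of vertices. Once this bookkeeping is handled, length preservation is immediate from additivity of segment lengths, and no further geometric argument is required.
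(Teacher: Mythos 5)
Your proof is correct and follows essentially the same route as the paper: the paper's own proof is a one-sentence appeal to the construction of $\G$ (every valid regular chain between obstacle vertices yields an edge of matching weight, hence a one-to-one correspondence of paths), and your argument is simply a more explicit spelling-out of that same correspondence, including the decomposition at obstacle vertices and the additivity of lengths.
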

\begin{proof}
We construct $\G$ such that any valid regular chain of segments from a vertex $u$ to a vertex $v$ has a corresponding edge between the analogous nodes in the graph with the weight equal to the length of the chain. Therefore, we can establish a one to one mapping between paths in the polygonal domain which consists of only regular chains of segments, and paths in $\G$.
\end{proof}

\subsection{Finding the shortest path in the graph}
From Lemma~\ref{lem_shortest_path}, we conclude that if we need the shortest path between two vertices consisting of only regular chains of segments that satisfies some properties in the polygonal domain, we can look for the shortest path in $\G$ between the analogous nodes with the same properties.

The best algorithm for finding the shortest path between two specific nodes in a weighted graph is Dijkstra's algorithm~\cite{cormen2009introduction}, with $O(|E_\G| + |V_\G| \log |V_\G|)$ running time using Fibonacci heap data structure. But the problem here is that Dijkstra's algorithm is not directly applicable to $\G$, because not all paths in $\G$ are acceptable.

When the object moves on a path, because of the \emph{maximum turning angle} property, it could not have sharp turns at obstacle vertices. Therefore, the angle between two consecutive regular chains of segments must be less than or equal to $\alpha$. Since in Dijkstra's algorithm, edges are selected without any concern, the result of the algorithm may not satisfy the path requirements.

It is possible to transform $\G$ into a new graph and apply Dijkstra's algorithm to it to find the shortest path which satisfies the path requirements. But the large size of the new graph is another trouble. The graph has $O(|E_\G|)$ vertices and $O(\sum_{v \in V_G} \outdeg(v^2))$ edges, where $\outdeg(v)$ is the out-degree of $v$, i.e. the number of edges incident from $v$. If we apply Dijkstra's algorithm on this graph, the running time of the algorithm will be $O(|E_\G||V_\G|)$~\cite{Boroujerdi1998efficient,DECAEN1998245}, which in the worst case will lead to $O(n^3)$.

We here introduce an algorithm to find the shortest path with the path requirements, whose main idea is borrowed from Boroujerdi and Uhlmann~\cite{Boroujerdi1998efficient}. In this algorithm, no new graph is generated and we only modify Dijkstra's algorithm to find the shortest path with the given requirements. The running time of this algorithm will be $O(|E_\G| \log|V_\G|)$, which is $O(n^2 \log n)$ in the worst case.

Let $v$ be a node of $\G$ and $e$ be an edge incident from $v$. Let $\ang{e, v}$ denote the angle of the first segment of regular chain of segments $e.c$ traversed when moving away from $v$. At the beginning of the algorithm, we sort the outgoing edges of each node $v$, according to their angles, in counter-clockwise order~(see Figure~\ref{figure_out_edges}). We store the sorted list in $\edges{v}$, a data structure capable of searching and deleting items in logarithmic time, like an AVL tree.

\begin{figure*}
	\centering
	\begin{subfigure}[b]{.49\linewidth}
		\centering
		\includegraphics[width=.7\linewidth]{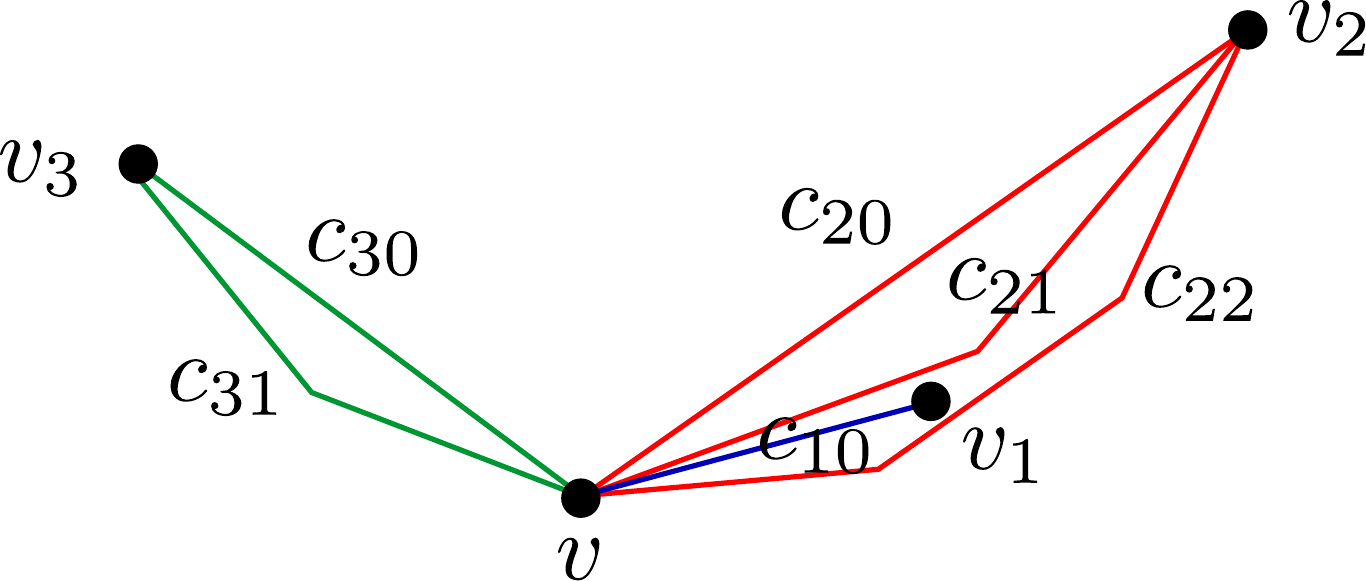}
		\caption{}
	\end{subfigure}
	\begin{subfigure}[b]{.49\linewidth}
		\centering
		\includegraphics[width=.7\linewidth]{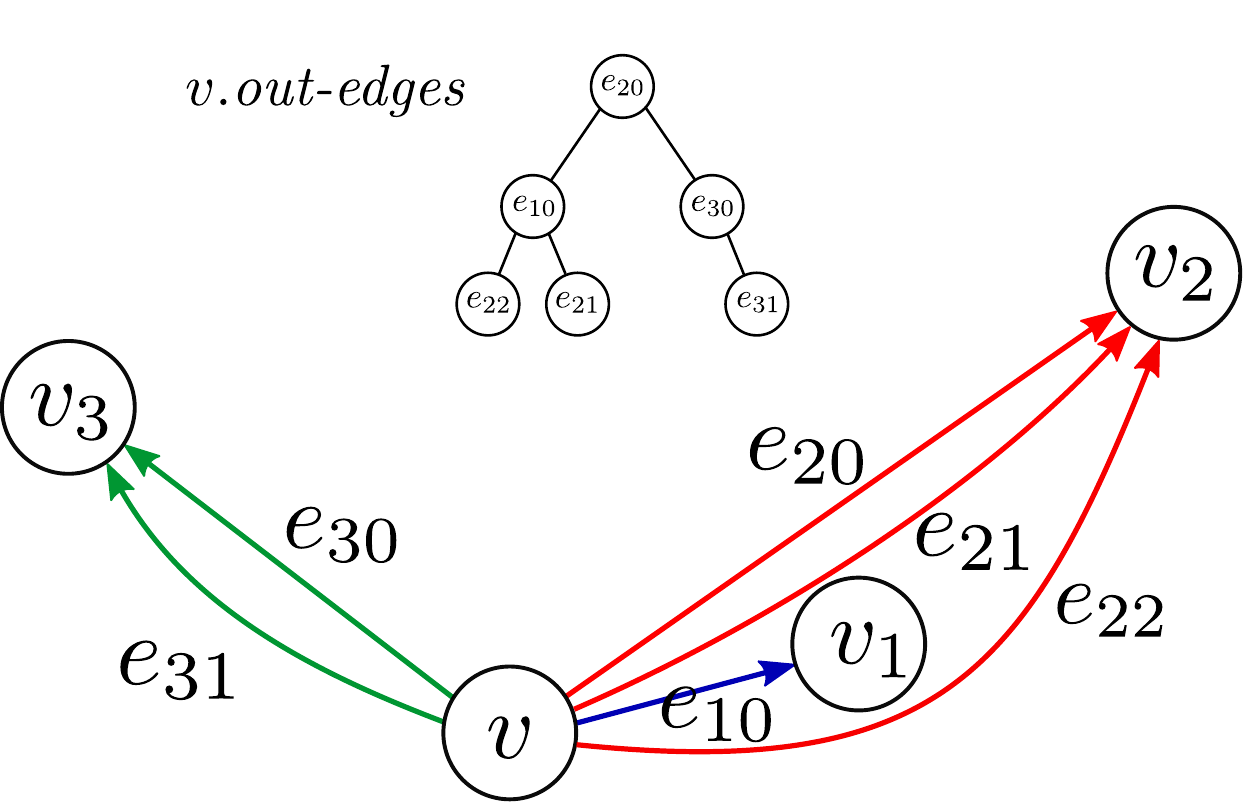}
		\caption{}
	\end{subfigure}
	\caption{Demonstration of outgoing edges of a vertex $v$. (a) Some regular chains of segments from $v$. (b) The corresponding edges incident from $v$. In tree $\edges{v}$ the outgoing edges are stored in sorted order.}
	\label{figure_out_edges}
\end{figure*}

For each edge $e$ incident to $v$, when the object arrives at $v$ through $e.c$, only a subset of the outgoing edges of $v$ are applicable for departing from $v$. The angle of the first segment of these edges must be in range $\vlr(e.c)$. We denote this valid successors of $e$ by $\vs(e)$. If the object comes to $v$ via $e.c$, it can leave $v$ following the chain of any edge in $\vs(e)$, without violating the \emph{maximum turning angle} requirement. It is easy to see that the edges of $\vs(e)$ are placed in $\edges{v}$ consecutively. The following lemma gives an upper bound on the time needed to find $\vs(e)$ for an edge $e$.

\begin{lemma}\label{lemma_vs_time}
Given a graph $\G$ such that the outgoing edges of each vertex are sorted in counter-clockwise order based on the first edge of their chains, it is possible to find $\vs(e)$ for an edge $e$ incident to $v$ in $O(\log(\outdeg(v)) + |\vs(e)|)$ time.
\end{lemma}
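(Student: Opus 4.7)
The plan is to exploit the fact that $\vlr(e.c)$ is a single angular interval of width $2\alpha$ and that $\edges{v}$ already stores the outgoing edges sorted by angle in a balanced search tree, so $\vs(e)$ corresponds to a contiguous block of leaves of $\edges{v}$.

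First I would compute the endpoints of $\vlr(e.c)$. Let $v$ be the tail of $e$ and let $\psi$ be the angle of the last segment of $e.c$ as the chain arrives at $v$; from the discussion accompanying Figure~\ref{fig_leaving_direction}, the leaving direction after traversing $e.c$ and turning by at most $\alpha$ lies in $[\psi-\alpha,\psi+\alpha]$, so $\vlr(e.c) = [\theta_1,\theta_2]$ with $\theta_2 - \theta_1 = 2\alpha$. Since $\psi$ can be read off from $e.c$ in $O(1)$ time (its direction is determined by the starting/ending coordinates and the parameters computed in Section~\ref{Regular_chain_of_segments}), obtaining $\theta_1,\theta_2$ is $O(1)$.

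Next I would locate in $\edges{v}$ the first outgoing edge $e'$ with $\ang{e',v} \geq \theta_1$. Because $\edges{v}$ is a balanced search tree (e.g.\ an AVL tree) keyed by angle, this predecessor/successor search costs $O(\log(\outdeg(v)))$. From that position I would walk through the tree in angular (in-order) order, reporting each edge $e'$ as long as $\ang{e',v} \leq \theta_2$, and stopping as soon as this fails. By the standard amortized bound on successor walks in a balanced search tree, reporting $k$ consecutive leaves after the initial search takes $O(k)$ additional time, so the total cost of this step is $O(\log(\outdeg(v)) + |\vs(e)|)$.

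The only non-routine point is that angles live on the circle $[0,2\pi)$, so the interval $[\theta_1,\theta_2]$ may wrap around $0$. I would handle this by splitting into the two intervals $[\theta_1,2\pi)$ and $[0,\theta_2-2\pi]$ when $\theta_2 \geq 2\pi$, performing one search per piece and concatenating the walks; this adds only an $O(\log(\outdeg(v)))$ term and another $O(|\vs(e)|)$ traversal cost, so the overall bound is preserved. I do not anticipate a serious obstacle beyond getting this wrap-around bookkeeping right; the correctness is immediate from the fact, noted just before the lemma statement, that edges of $\vs(e)$ occupy a contiguous range in the sorted order of $\edges{v}$.
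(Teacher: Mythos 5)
Your proposal is correct and matches the paper's own argument: both perform an $O(\log(\outdeg(v)))$ search in the sorted structure $\edges{v}$ for the endpoints of $\vlr(e.c)$, report the contiguous block of edges in $O(|\vs(e)|)$ time, and explicitly handle the circular wrap-around case where $\theta_2$ precedes $\theta_1$. The only cosmetic difference is that the paper uses two binary searches (one per endpoint) while you use one search followed by a bounded in-order walk; the bound is identical either way.
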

\begin{proof}
Let $v$ be a vertex of $\G$, and $e$ be an edge incident to $v$. Let $\vlr(e.c) = [\theta_1, \theta_2]$. We can easily find the position of $\theta_1$ (resp. $\theta_2$) in $\edges{v}$ in $O(\log(\outdeg(v))$ time. This is done by applying two binary searches with $\theta_1$ and $\theta_2$ as keys on $\edges{v}$, and reporting all edges whose angles are between the keys. The binary search takes $O(\log(\outdeg(v)))$ time, and reporting edges in $\vs(e)$ sequentially takes $O(|\vs(e)|)$ time, which proves the claim. It is necessary to note that since the outgoing edges are arranged around $v$ circularly, $\theta_2$ might be smaller than $\theta_1$. If this is the case, the report will consist of all edges whose angle is greater than $\theta_1$ \textit{or} smaller than $\theta_2$. 
\end{proof}

We now describe the modified Dijkstra's algorithm for this problem. Instead of processing nodes of the graph as Dijkstra's algorithm, we insert edges into a priority queue and process edges with minimum priority successively. For each edge $e = (u, v)$, we assign a value $\dist{e}$, whose initial value is $\infty$.  This value denotes the length of the shortest path to $v$, if the object starts from $s$ and finally passes through $e.c$. Furthermore, $\dist{e}$ plays the role of the priority in the priority queue used in the algorithm. We store another value $\pred{e}$, for each edge $e$, which denotes the last edge before $e$, in the shortest path to $e$.  The predecessor value is initially set to \textit{null}. This value helps to construct the shortest path at the end of the algorithm.

The algorithm starts by processing each edge $e$ incident from $s$, the starting point. We set $\dist{e} = e.w$, and insert $e$ into the priority queue, $Q$, with $\dist{e}$ as its priority. Now, as long as $Q$ is not empty, we remove edge $e$ with the minimum priority from $Q$. If $e$ is incident to $t$, the target point, we are done, and report $\dist{e}$ as the shortest path distance from $s$ to $t$, and $e$ and the sequence of predecessors as the shortest path. In the other case, where $e$ is not incident to $t$, but incident to some other node $v$, we process edges of $\vs(e)$, one by one. For each edge $e' \in \vs(e)$, we remove $e'$ from $\edges{v}$, set $\dist{e'} = \dist{e} + e'.w$ and $\pred{e'} = e$, and finally insert $e'$ into $Q$ with priority $\dist{e'}$. We here remove $e'$ from $\edges{v}$ because we have found the shortest path to $e'$ and $\dist{e'}$ achieves its final value. Therefore, it cannot be updated furthermore, and it is not required to be checked for other segments incident to $v$. This process continues and edges of $\G$ are explored successively until we arrive at $t$. Because $\G$ is connected, we will eventually get into $t$.

The outline of RCS Path Planning algorithm is presented in Algorithm~\ref{alg_path_planning}.

\begin{algorithm}
	\caption{RCS Path Planning}\label{alg_path_planning}
	\begin{algorithmic}[1]
		
		\Require $O=\{P_1,P_2,\dots,P_h\}$: the set of obstacles; $s$: the starting point; $t$: the target point; $l$: the minimum leg length; $\alpha$: the maximum turning angle.
		\Ensure $P$: the shortest path satisfying the path requirements, which consists only of regular chains of segment; $d$: the length of the shortest path.
		
		\Statex
		\State $V_\G \gets \text{The set of obstacle vertices} \cup \{s, t\}$ \Comment{Create the set of nodes of the graph}
		\Statex
		\State $E_\G \gets \emptyset$
		\ForEach {$u, v \in V_\G$} \Comment{Create the set of edges of the graph }
			\ForEach {regular chain $c$ between $u, v$}
			    \If {$c$ is valid}
				    \State $e \gets (u, v)$ \Comment{Create an edge equivalent to $c$}
				    \State $e.c \gets c$
				    \State $e.w \gets |c|$
				    \State $\pred{e} \gets \textit{null}$
				    \State $\dist{e} \gets \infty$
				    \State add $e$ to $E_\G$
				\EndIf
			\EndFor
		\EndFor
		\Statex
		\ForEach {$v \in V_\G$} \Comment{Compute $\edges{v}$}
			\State $\edges{v} \gets$ the list of edges incident from $v$ sorted according to the angle of the first segment of their chains in counter-clockwise order
		\EndFor
		\Statex
		\State $Q \gets$ empty priority queue \Comment{Initially the queue is empty}
		\Statex
		\ForEach {$e \in \edges{s}$} \Comment{Process the edges incident to $s$}
			\State remove $e$ from $\edges{s}$
			\State $\dist{e} \gets e.w$
			\State insert $e$ into $Q$
		\EndFor
		\Statex
		\While {$Q$ is not empty}
			\State $e \gets$ extract the minimum from $Q$
			\If {$e$ is incident to $t$}
				\State \Return $e$ and its predecessors as $P$, and $\dist{e}$ as $d$
			\Else
				\State $v \gets$ tail$(e)$
				\ForEach {$e' \in \vs(e)$}
					\State remove $e'$ from $\edges{v}$
					\State $\dist{e'} \gets \dist{e} + e'.w$
					\State $\pred{e'} \gets e$
				\EndFor
			\EndIf
		\EndWhile
	\end{algorithmic}
\end{algorithm}

\subsection{Analysis of the algorithm}
In Lines~1-11, the graph is constructed, which takes $O(n^2 \log n)$ time, as described before in Lemma~\ref{lem_graph_complexity}. Lines~12-13 sort the outgoing edges of nodes and need $O(\sum_{v \in V_\G}\outdeg(v) \log(\outdeg(v))) = O(|E_\G| \log(|V_\G|))$ time. Lines~14-18 initialize the priority queue and process the outgoing edges of the starting point, and take $O(\outdeg(s) \log(\outdeg(s)))$ time which is $O(|V_\G| \log |V_\G|)$. In Lines~19-28, each edge of the graph is inserted into $Q$ at most once, and is removed from $Q$ at most once, and removed from the list of the outgoing edges of its head at most once. These operations take $O(|E_\G| \log |E_\G|) = O(|E_\G| \log |V_\G|)$ total time. Finding $\vs(e)$ for each edge $e$, as proved in Lemma~\ref{lemma_vs_time}, takes $O(\log |V_\G| + |\vs(e)|)$ time. Since each edge is considered only in one $\vs(e)$, and is removed from the list of outgoing edges of its head, the total time spends for lines~25-28 is $O(|E_\G| \log |V_\G|)$. Therefore, the whole running time of the algorithm will be $O(n^2 \log n + |E_\G| \log |V_\G|)$, which is $O(n^2 \log n)$. In the algorithm, the space is used for storing the ray shooting data structure, $\G$ and $Q$, which is $O(n^2 + |E_\G|) = O(n^2)$.

The following theorem summarizes our results about finding the shortest path consisting of regular chains of segments.

\begin{theorem}\label{thrm_dijkstra_complexity}
The RCS algorithm finds the shortest path from $s$ to $t$, which consists of regular chains of segments, and satisfies the path requirements, in $O(n^2 \log n)$ time using $O(n^2)$ space.
\end{theorem}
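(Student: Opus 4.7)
The plan is to separate the argument into correctness of the output path and the resource bounds. For correctness, Lemma~\ref{lem_shortest_path} already shows that every path from $s$ to $t$ in the polygonal domain consisting only of regular chains of segments corresponds to an equal-length walk in $\G$, so it suffices to verify that the modified Dijkstra routine reports the shortest walk in $\G$ whose consecutive edges respect the maximum turning angle constraint, i.e., at every intermediate vertex $v$ the traversal goes from some incoming edge $e$ to an outgoing edge lying in $\vs(e)$.

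The heart of correctness is an edge-level Dijkstra invariant: at the moment an edge $e$ is extracted from $Q$, the value $\dist{e}$ equals the length of the shortest feasible path from $s$ that ends by traversing $e.c$. I would prove this by induction on the extraction order. The base case covers the edges incident from $s$, inserted in Lines~14-18 with $\dist{e} = e.w$ (no turning constraint applies at $s$). For the inductive step, the classical Dijkstra exchange argument shows that any strictly shorter feasible path to $e$ would have to contain an earlier edge still present in $Q$ whose priority is less than $\dist{e}$, contradicting extract-min. The only twist is the chaining through $\vs$, which I would handle by observing that the algorithm relaxes an outgoing edge $e'$ of $v$ from an incoming edge $e$ only when $e' \in \vs(e)$, directly encoding the maximum-turning-angle requirement. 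Deleting $e'$ from $\edges{v}$ after its first relaxation is safe because priorities extracted from $Q$ are nondecreasing over time, so any later incoming edge at $v$ could only offer a path at least as long.

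For the resource bounds I would sum the costs already isolated in the analysis preceding the theorem. Graph construction costs $O(n^2 \log n)$ by Lemma~\ref{lem_graph_complexity}, and the ray-shooting structure it uses occupies $O(n^2)$ space. Sorting the outgoing lists totals $O(|E_\G| \log |V_\G|)$. In the main loop each edge is inserted into and extracted from $Q$ at most once, contributing $O(|E_\G| \log |E_\G|)$ queue work; moreover, each edge $e'$ appears in at most one set $\vs(e)$ during the whole execution, because it is deleted from $\edges{v}$ upon first relaxation, so Lemma~\ref{lemma_vs_time} gives total successor-set cost $O(\sum_e (\log |V_\G| + |\vs(e)|)) = O(|E_\G| \log |V_\G|)$. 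Substituting $|V_\G| = n+2$ and $|E_\G| = O(n^2)$ yields the $O(n^2 \log n)$ time bound, while the space is dominated by the graph and the ray-shooting structure, both $O(n^2)$. The main obstacle I anticipate is the inductive step above: one must convincingly argue that the safety of deleting $e'$ from $\edges{v}$ at first relaxation follows purely from the monotonicity of extracted priorities and is not broken by the turning-angle restriction, which slightly complicates the usual relaxation bookkeeping by keying state on edges instead of nodes.
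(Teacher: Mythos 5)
Your proposal is correct and follows essentially the same route as the paper: the resource bounds are obtained by exactly the same line-by-line accounting (graph construction via Lemma~\ref{lem_graph_complexity}, sorting the outgoing lists, one insertion/extraction of each edge into $Q$, and one appearance of each edge in a single $\vs(e)$ via Lemma~\ref{lemma_vs_time}), yielding $O(n^2\log n)$ time and $O(n^2)$ space. Your edge-level Dijkstra induction --- in particular the observation that deleting $e'$ from $\edges{v}$ at its first relaxation is safe because extracted priorities are nondecreasing --- simply makes explicit the correctness argument that the paper states only informally in the prose preceding the theorem.
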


\section{Extensions}\label{sec_extension}
In this section we consider two extensions of the problem and try to use our technique to solve these extensions. In the first problem there are several target points that are to be processed with a single starting point, and we have to find the best path to reach each target point. In the second extension, we solve the problem assuming that we have to arrive at the target point from a given range of directions.

\subsection{Problem in the query mode}
In the original problem, it was assumed that the target point $t$ is given at the same time as the other inputs. In this section we divide the algorithm into two parts, a preprocessing phase, and a query phase. The set of obstacles and the starting point are given in the preprocessing phase, and in the query phase the target point is determined. The problem in this mode is suitable when we want to find the best time/path to reach a moving target point in a fixed environment.

We solve the problem in the query mode as follows. Given the set of obstacles and the starting point in the preprocessing phase, we construct graph $\G$ and run Dijkstra's algorithm on $\G$ to compute $\dist{e}$ for all edges of $\G$. An edge $e = (u, v)$ is a candidate predecessor for each edge $e'$ incident from $v$ if $\ang{e', v}$ lies in $\vlr(e.c)$. It is the actual predecessor of $e'$, if among all such candidates, $\dist{e}$ has the smallest value. With this in mind, for each node $v$, we keep a sorted list of ranges around $v$ such that the predecessor of edges in each range is fixed. We store this list, denoted by $\preds{v}$, in an AVL tree, so that the insertion and searching takes logarithmic time. This tree can easily be constructed, as Dijkstra's algorithm is run on $\G$. At the beginning, $\preds{v}$ is empty for each node $v$. When the first edge $e$ incident to $v$ is processed, $\vlr(e.c)$ is added to $\preds{v}$ and $e$ is linked to it. During the execution of the algorithm, when a new edge $e'$ incident to $v$ is being processed, it may add a new range to $\preds{v}$. Since the edges are processed in increasing order of their distance, $e'$ is predecessor only in a range where no previous predecessor is assigned. Furthermore, for each edge $e$, $\vlr(e.c)$ covers a range of size $\alpha$. Therefore, for each edge $e$, if $e$ is predecessor for some range, it will be a connected range, which can easily be found and inserted to $\preds{v}$ in $O(\log n)$ time. Since the number of incoming edges to a vertex $v$ is $O(n)$, the number of ranges in $\preds{v}$ is also $O(n)$.

In the query phase, given the target point $t$, we consider each node $v$ in $\G$ and find all valid regular chains of segments from $v$ to $t$. We here need to execute a ray shooting for each segment of a chain to see if it is valid. For each valid chain, we add an edge $e$ from $v$ to $t$ in $\G$. Furthermore, we find the predecessor of $e$ using $\preds{v}$ in $O(\log n)$ time and accordingly assign $\pred{e}$. Finally we add the distance of $s$ to $\pred{e}$ to the length of $|e.c|$ and store it into $\dist{e}$.
After processing all edges to $t$, the final solution is an edge $e$ incident to $t$ with the smallest value of $\dist{e}$, along with the sequence of predecessors.

\begin{theorem}
For the problem of path planning with the path requirements, we can preprocess a data structure in $O(n^2 \log n)$ time using $O(n^2)$ space, to answer the query for any target point $t$ in $O(n \log n)$ time. In these upper bounds $n$ is the total number of obstacle vertices.
\end{theorem}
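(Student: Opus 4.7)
The plan is to split the algorithm of Theorem~\ref{thrm_dijkstra_complexity} into a preprocessing that depends only on the obstacle set and $s$, and a query that absorbs $t$. In the preprocessing I would first build graph $\G$ together with the Chazelle ray-shooting structure on the obstacle edges, which by Lemma~\ref{lem_graph_complexity} costs $O(n^2\log n)$ time and $O(n^2)$ space. I would then run the modified Dijkstra of the main algorithm from $s$ \emph{without} stopping early at $t$; Theorem~\ref{thrm_dijkstra_complexity} charges only $O(\log|V_\G|)$ per edge insertion into or extraction from $Q$, so relaxing every edge still costs $O(n^2\log n)$, after which $\dist{e}$ holds the length of the shortest feasible sub-path from $s$ that ends with $e.c$ for every $e\in E_\G$.

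Concurrently with this Dijkstra, I would maintain the tree $\preds{v}$ at every node $v$. Whenever an incoming edge $e=(u,v)$ is extracted from $Q$, $\dist{e}$ is minimal among all not-yet-processed incoming edges at $v$, so for every leaving direction $\theta\in\vlr(e.c)$ that is not already claimed by an earlier processed edge, $e$ is the correct predecessor of any future outgoing edge whose first segment has angle $\theta$. I would locate the endpoints of the arc $\vlr(e.c)$ in $\preds{v}$ in $O(\log n)$ time, walk only through the currently uncovered sub-arcs inside this range, and tag each of them with predecessor $e$. A standard amortisation gives the time bound: once a sub-arc is claimed it is never touched again, and its endpoints are always endpoints of some $\vlr(e'.c)$ for an incoming $e'$ at $v$, so the total number of sub-arcs ever deposited into $\preds{v}$ is $O(\outdeg(v))$, which sums to $O(n^2)$; at $O(\log n)$ per insertion this contributes $O(n^2\log n)$ time and $O(n^2)$ additional space.

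For the query, given $t$, I would visit every $v\in V_\G$ and enumerate the at most $2\pi/\alpha=O(1)$ candidate regular chains of segments from $v$ to $t$, exactly as in graph construction. Each chain has $O(1)$ segments, whose collective validity is decided by $O(1)$ ray-shooting queries of cost $O(\log n)$ each. For every valid chain I form the candidate edge $e'=(v,t)$, look up $\pred{e'}$ by searching $\preds{v}$ with the key $\ang{e', v}$ in $O(\log n)$ time (or, if $v=s$, take no predecessor since $s$ imposes no entering direction), and set $\dist{e'}=\dist{\pred{e'}}+e'.w$. The returned answer is the edge $e'$ incident to $t$ with smallest $\dist{e'}$, together with the path recovered by chasing $\pred{\cdot}$ pointers back to $s$. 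The query performs $O(1)$ chains per $v$ and $O(n)$ choices of $v$, each at cost $O(\log n)$, for $O(n\log n)$ total.

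The main obstacle I anticipate is the $\preds{v}$ bookkeeping. A single extracted edge $e$ may, a priori, cover several disjoint pieces of the circle around $v$, so the ``$O(\log n)$ per insertion'' intuition has to be replaced by an amortised argument: the pairwise disjointness of the claimed sub-arcs and the fact that their endpoints come from the $O(\outdeg(v))$ $\vlr$-arcs at $v$ together cap the total number of insertions at $O(\outdeg(v))$ per vertex. Once this invariant is made rigorous, correctness of the query follows from the optimality of $\dist{\pred{e'}}$ established during the completed Dijkstra run, and the preprocessing and query bounds are immediate.
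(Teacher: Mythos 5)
Your proposal is correct and follows essentially the same route as the paper: build $\G$ and the ray-shooting structure, run the modified Dijkstra to completion from $s$ while recording, for each node $v$, a sorted structure $\preds{v}$ of angular ranges mapped to their optimal predecessor edge, and then answer a query by testing the $O(1)$ candidate chains from each vertex to $t$ with $O(\log n)$ ray-shooting and predecessor lookups. The only divergence is in bounding the maintenance of $\preds{v}$: the paper observes that, because all $\vlr$ ranges have equal width, each newly processed incoming edge claims a single connected arc insertable in $O(\log n)$ time, whereas you replace this with an amortised count of sub-arcs (which should be charged to the in-degree rather than $\outdeg(v)$, though the global $O(n^2)$ total is unaffected); both arguments yield the stated bounds.
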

\begin{proof}
We construct the graph in $O(n^2 \log n)$ time using $O(n^2)$ space and run Dijkstra's algorithm in $O(|E_\G| \log |V_\G|)$ time as proved in Lemma~\ref{lem_graph_complexity} and Theorem~\ref{thrm_dijkstra_complexity}, respectively. In the query time, we need to find and process each edge to $t$ in $O(\log n)$ time to find a possible path to $t$. Since there are at most $O(n)$ edges to $t$, processing all edges and choosing the shortest path to $t$ takes $O(n \log n)$ time.
\end{proof}

\subsection{Approaching from a given direction}\label{sec_extension_direction}
In some path planning applications, it is desirable to reach the target from a given direction. For example the most damage will take place when a missile hits the target orthogonally, or a UAV landing on an aircraft carrier needs to approach it from a fixed direction. In this section we describe how we can use the method to achieve this goal.

In this version of the problem, in addition to the previous inputs, a range of directions like $\Theta = [\theta_1, \theta_2]$ is given, and we want to approach the target from a direction in range $\Theta$.

In Section~\ref{Problem_solving_algorithm} we described how to construct a graph $\G$ that models paths only consisting of regular chains of segments. Each edge $e = (u, v)$ in this graph has the following meaning: There exists a regular chain of segments $e.c$ from obstacle vertex $u$ to obstacle vertex $v$ which leaves $u$ and arrives at $v$ in a fixed direction determined by $e.c$.

Therefore, the solution is as follows. We continue the while loop of the algorithm until the edge that is processed comes into $t$ with an angle in range $\Theta$, or $Q$ becomes empty. It is obvious that in the former case the resulting path reaches $t$ in a direction in the desirable range, and in the latter case, there is no path with the given requirements.

By the above argument we conclude the following theorem:
\begin{theorem}
In the problem of path planning with a fixed approaching direction, a path from the starting point $s$ to the target point $t$ with the required properties can be found in $O(n^2 \log n)$ time using $O(n^2)$ space, where $n$ is the total number of obstacle vertices.

Furthermore, we can preprocess a data structure in $O(n^2 \log n)$ time using $O(n^2)$ space, to answer the query version of the problem for any target point $t$ in $O(n \log n)$ time.
\end{theorem}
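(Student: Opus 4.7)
The plan is to reuse the graph $\G$ and the modified Dijkstra's algorithm from Section~\ref{Problem_solving_algorithm} essentially unchanged, exploiting the fact that, by construction, every edge $e = (u,v)$ of $\G$ corresponds to a regular chain of segments $e.c$ that arrives at $v$ in a single, well-defined direction (the direction of the last segment of $e.c$). Hence ``approaching $t$ from a direction in $\Theta$'' translates directly into ``the final edge of the shortest path in $\G$ lies in the set $E_\Theta \subseteq E_\G$ of edges incident to $t$ whose arrival angle belongs to $\Theta$''.

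For the single-shot version, I would run the modified Dijkstra of Algorithm~\ref{alg_path_planning} verbatim, but change the termination test: instead of stopping as soon as any edge incident to $t$ is extracted from $Q$, we stop only when the extracted minimum-priority edge $e$ is incident to $t$ \emph{and} its arrival direction lies in $\Theta$. If $Q$ empties without this ever happening, we report that no feasible path exists. Because edges are extracted in nondecreasing order of $\dist{e}$, the first such edge gives the shortest feasible path; by Lemma~\ref{lem_shortest_path} this corresponds to a shortest feasible path in the polygonal domain satisfying the path requirements and approaching $t$ from $\Theta$. The complexity analysis of Theorem~\ref{thrm_dijkstra_complexity} carries over unchanged: the extra check per extracted edge is $O(1)$, so the running time remains $O(n^2\log n)$ and the space remains $O(n^2)$.

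For the query version, I would combine this idea with the preprocessing/query split developed in the previous subsection. In the preprocessing phase we build $\G$ (as in Lemma~\ref{lem_graph_complexity}), run the modified Dijkstra to compute $\dist{e}$ for every edge, and for each node $v$ construct the predecessor structure $\preds{v}$, all in $O(n^2\log n)$ time and $O(n^2)$ space. At query time, given $t$, we enumerate the $O(n)$ obstacle vertices $v$ and, for each one, generate the $O(1)$ valid regular chains from $v$ to $t$, testing each segment against the ray shooting structure in $O(\log n)$ time. For every valid chain whose arrival direction at $t$ lies in $\Theta$, we create a temporary edge $e$ to $t$, look up its predecessor in $\preds{v}$ in $O(\log n)$ time, and set $\dist{e} = \dist{\pred{e}} + e.w$. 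Finally we return the minimum such $\dist{e}$ together with the chain of predecessors. This gives $O(n\log n)$ query time.

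The only subtlety, and therefore the step that requires the most care, is verifying that filtering by arrival direction preserves correctness of the Dijkstra-based search: one must check that we do not prematurely discard a candidate edge that, although itself violating the direction constraint at $t$, could somehow participate in a shorter feasible path. This is immediate in our setting because the direction filter is only applied to edges actually incident to $t$ (i.e.\ edges that would terminate the search); all other edges of $\G$ are processed exactly as before, so the invariants of the modified Dijkstra are untouched. With this observation the two complexity bounds follow directly from Lemma~\ref{lem_graph_complexity}, Lemma~\ref{lemma_vs_time}, and Theorem~\ref{thrm_dijkstra_complexity}.
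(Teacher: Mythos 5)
Your proposal is correct and follows essentially the same route as the paper: the single-shot version modifies only the termination test of the while loop (stop when the extracted edge is incident to $t$ \emph{and} arrives in a direction in $\Theta$), and the query version filters the candidate chains to $t$ by arrival direction before looking up predecessors in $\preds{v}$. If anything, your write-up is more explicit than the paper's, which states the same idea informally and concludes the theorem without a separate proof.
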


\section{Algorithm Evaluation} \label{Algorithm_Evaluation}
Even though the running time and the space usage of the proposed algorithms are bearable, the dependency in the upper bounds is on the number of obstacle vertices and not on the scene size. In most applications the area in which the algorithm works is extremely large. But the complexity of the scene in terms of the number of obstacle vertices is not considerable. This can be made more rigorous if we apply a simplification step on the input to decrease the number of obstacle vertices.

In this section we perform several experiments to evaluate the RCS algorithm and show its effectiveness in practice. We compare the results to the results of the algorithm designed by Szczerba \textit{et al.}~\cite{szczerba2000robust}. We chose their algorithm because they solved the most similar problem to the problem considered in this paper. Their algorithm is based on the A* search algorithm, and we called it A* hereinafter. The evaluations are performed based on seven test cases. The source codes of the RCS and A* algorithms are available at the GitHub repository hosting service~\cite{ranjbarGithubRCS,ranjbarGithubAStar}.

In the first experiment, we run RCS and A* on a fixed complex polygonal domain from a given source point to several target points. The configuration is depicted in Figure~\ref{fig_case1_overall}. In this figure, $s$ is the source point and $t_1,\dots,t_5$ are five different target points. We use the maximum turning angle $\alpha = \pi/6$ and the minimum leg length $l = 50$ as the parameters for this test case. The target points are selected such that paths to each one have different number of turning points.

\begin{figure}[!t]
	\centering
	\includegraphics[width=7cm]{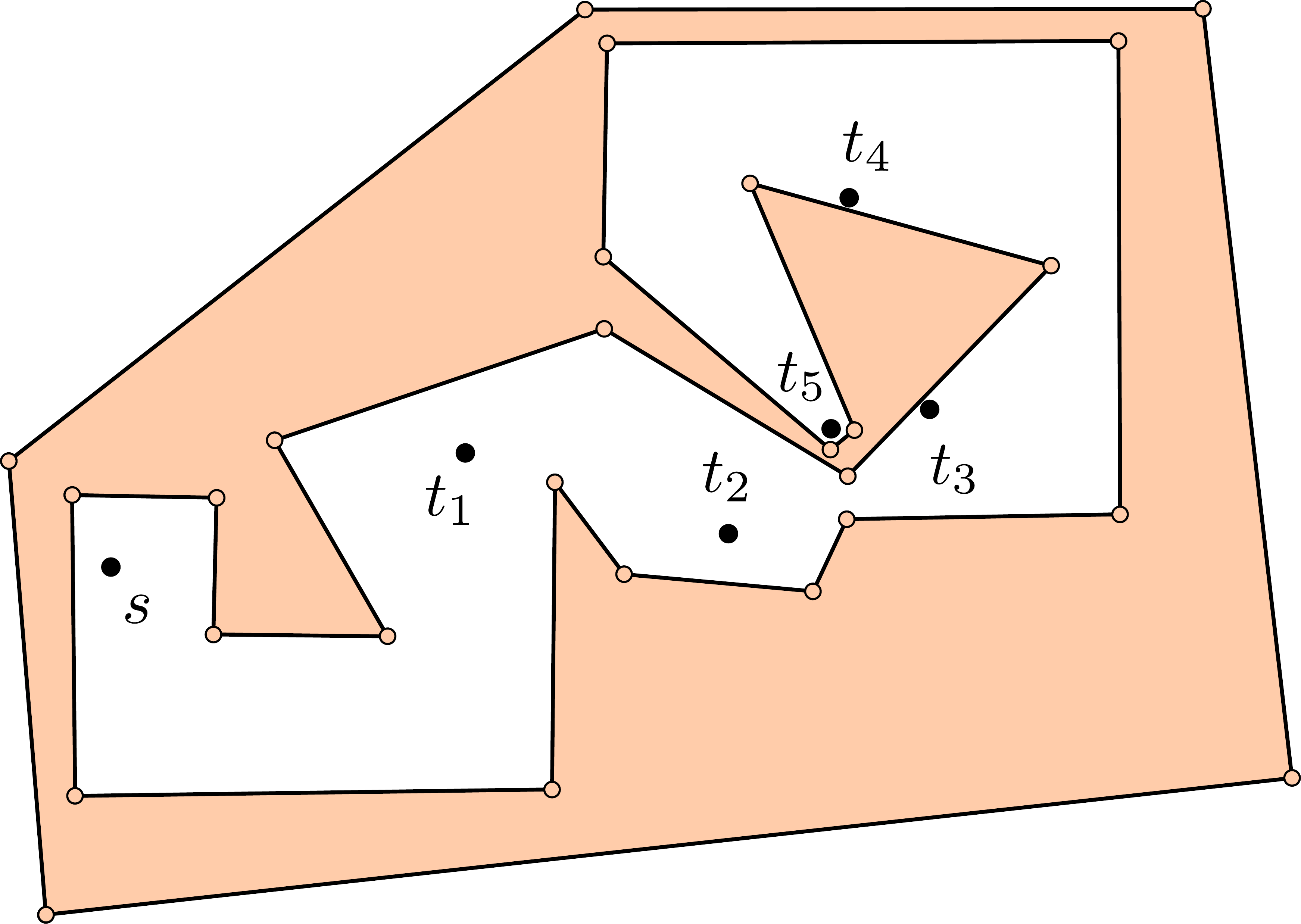}
	\caption{The scene used for the first experiment with $\alpha=\pi/6$ and $l=50$. The source point $s$ and five different target points $t_1,\dots,t_5$ are denoted in the figure.}
	\label{fig_case1_overall}
\end{figure}

\newcolumntype{C}[1]{>{\centering\let\newline\\\arraybackslash\hspace{0pt}}m{#1}}

\begin{table*}[!t]
	\caption{Comparison of the results of the RCS and A* algorithms on the configuration of Figure~\ref{fig_case1_overall}}
	\label{table_result1}
	\centering \scriptsize
	\begin{tabular}{|C{0.9cm}|C{1.7cm}|C{1.5cm}|C{1.5cm}|C{1.5cm}|C{1cm}|C{1cm}|C{1.5cm}|}
		\hline
		{Target} &
		{RCS\newline preprocessing time (ms)} &
		{RCS query time (ms)} &
		{RCS total time (ms)}  &
		{A* running time (ms)} &
		{RCS path length} &
		{A* path length} &
		{Relative difference} \\
		\hline
		$t_1$ & 6.28 & 0.26 & 6.54 & 375    & 487.6  & 470.4  & 3.5\%\\
		$t_2$ & 6.76 & 0.20 & 6.96 & 14183  & 715.5  & 670.2  & 6.3\%\\
		$t_3$ & 6.83 & 0.23 & 7.06 & 71164  & 998.5  & 952.2  & 4.6\%\\
		$t_4$ & 6.86 & 0.25 & 7.11 & 165713 & 1539.6 & 1467.0 & 4.7\%\\
		$t_5$ & 6.88 & 0.26 & 7.14 & 230670 & 1954.4 & 1860.8 & 4.8\%\\
		\hline
	\end{tabular}
\end{table*}

\begin{figure*}
    \centering
    \begin{subfigure}[b]{.49\linewidth}
    	\centering
    	\includegraphics[width=.9\linewidth]{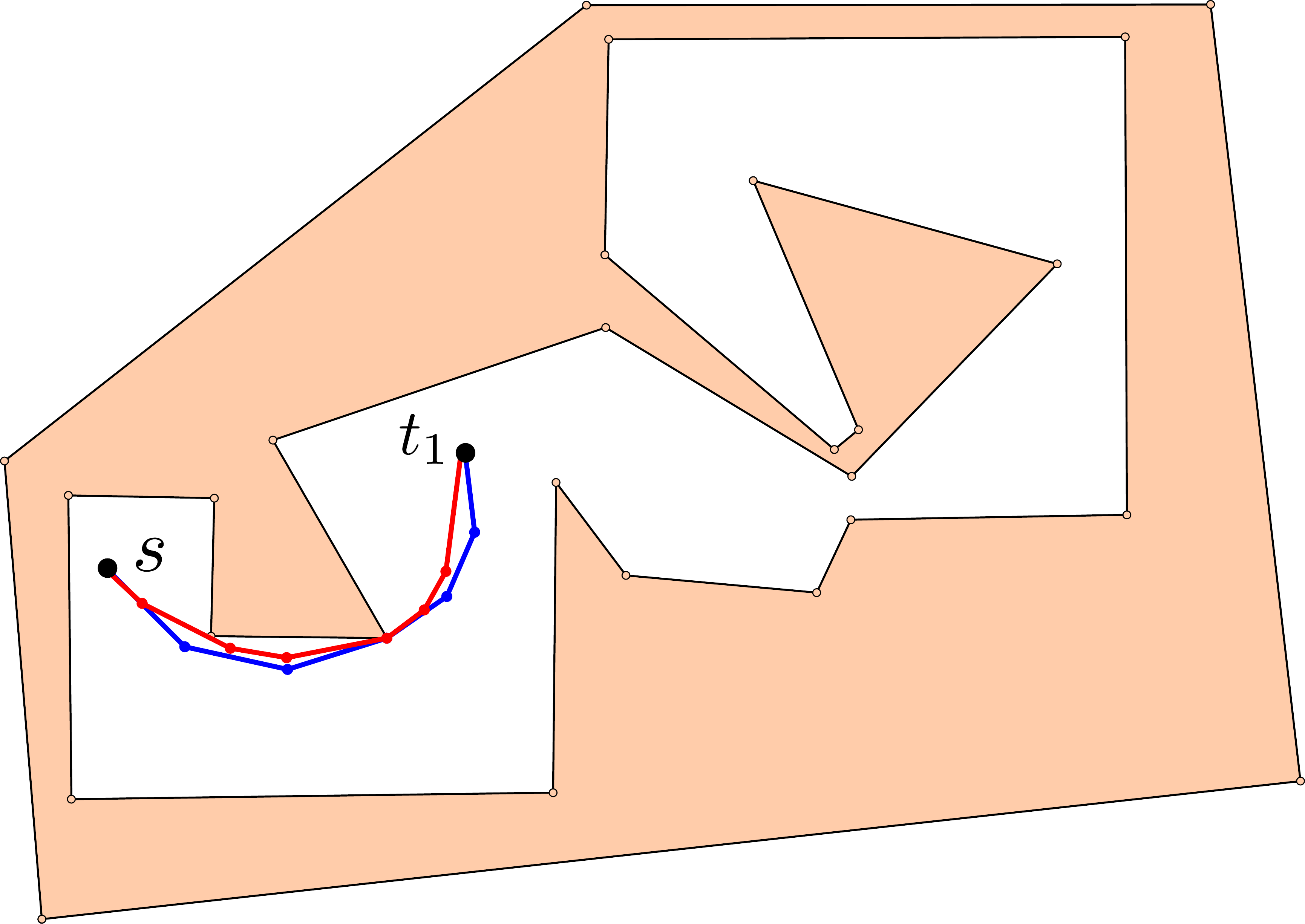}
    	\caption{}
    \end{subfigure}
	\begin{subfigure}[b]{.49\linewidth}
    	\centering
		\includegraphics[width=.9\linewidth]{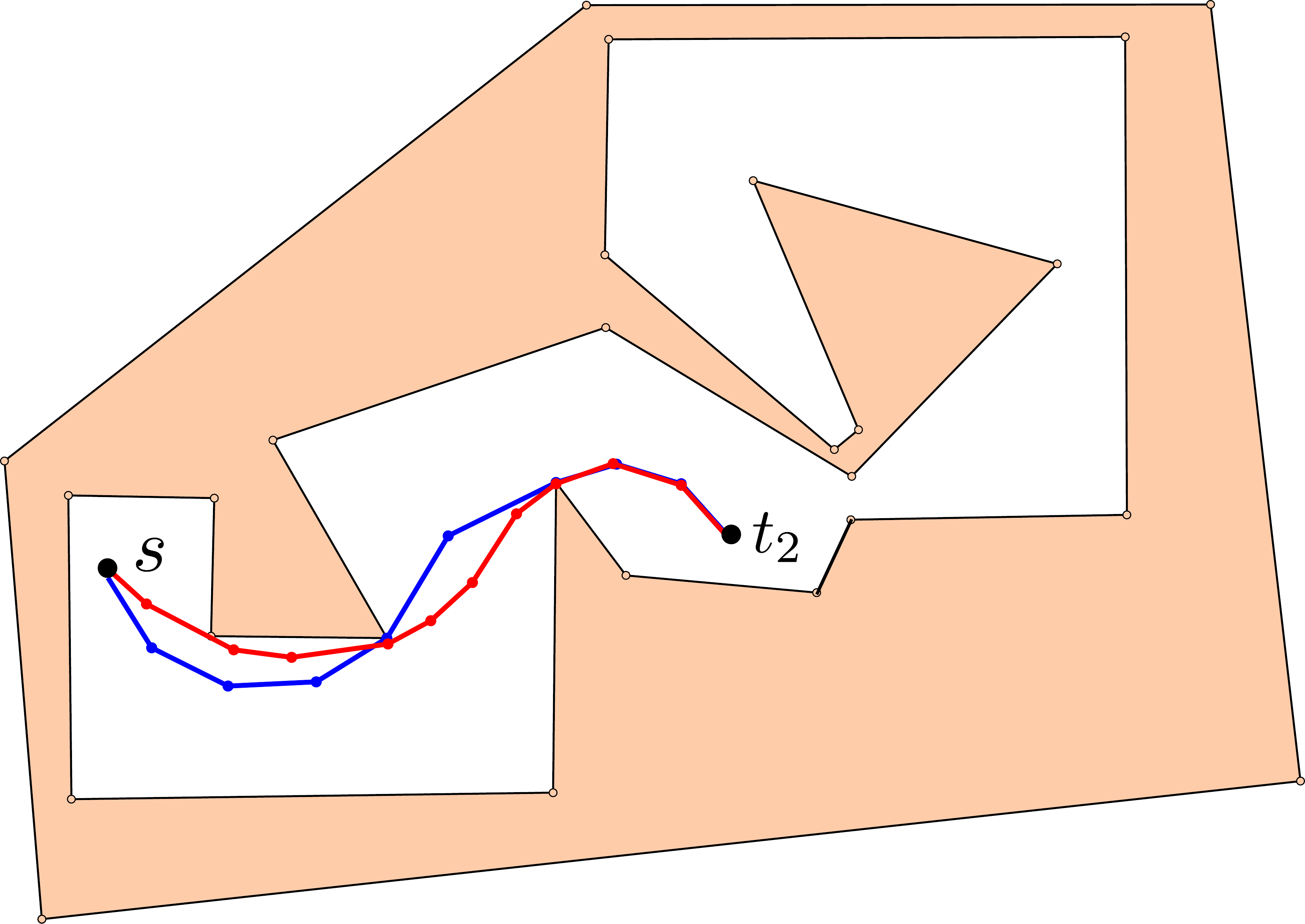}
    	\caption{}
    \end{subfigure}

\bigskip

    \begin{subfigure}[b]{.49\linewidth}
    	\centering
		\includegraphics[width=.9\linewidth]{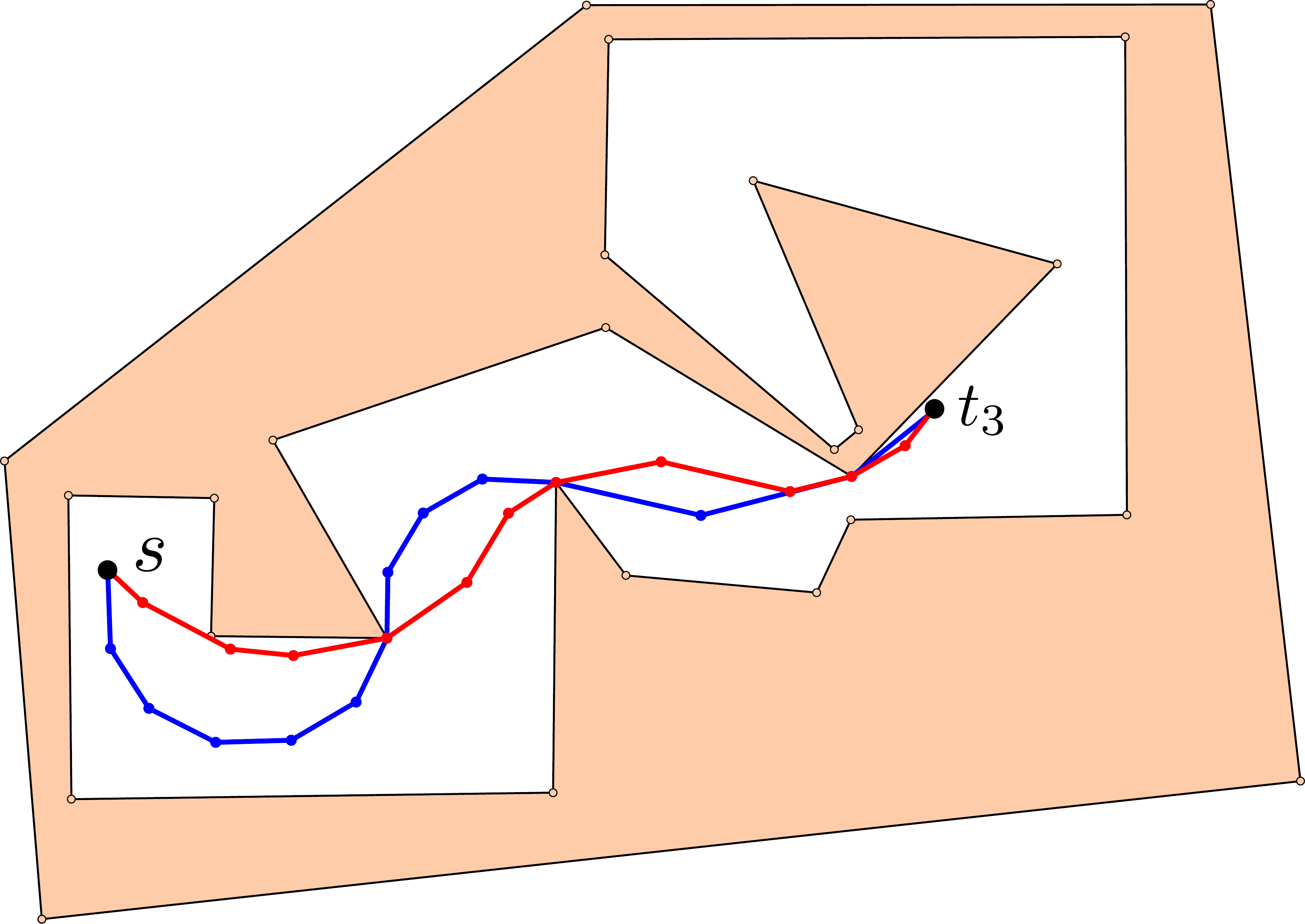}
    	\caption{}
	\end{subfigure}
	\begin{subfigure}[b]{.49\linewidth}
    	\centering
		\includegraphics[width=.9\linewidth]{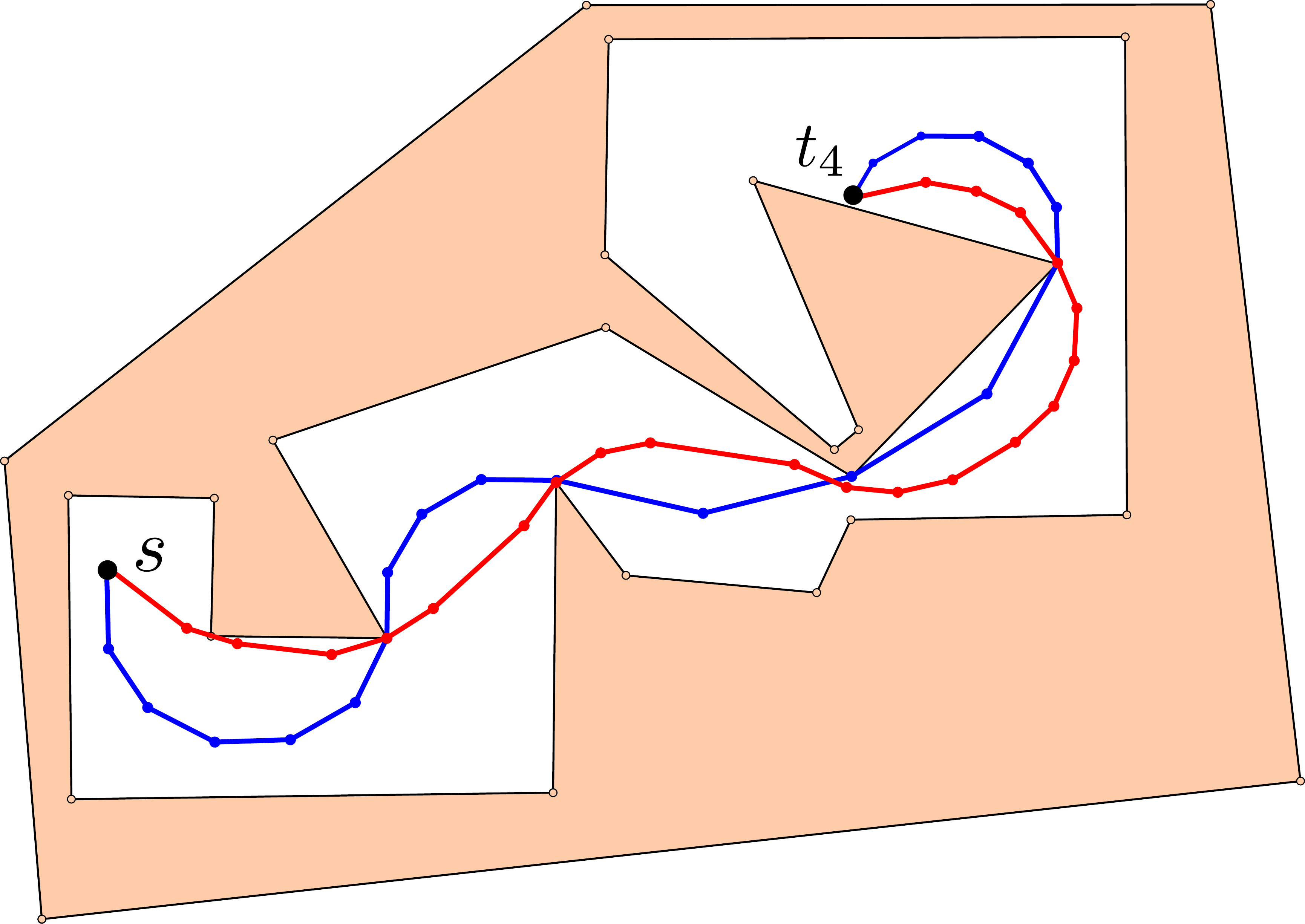}
    	\caption{}
	\end{subfigure}

\bigskip

    \begin{subfigure}[b]{.49\linewidth}
    	\centering
    	\includegraphics[width=.9\linewidth]{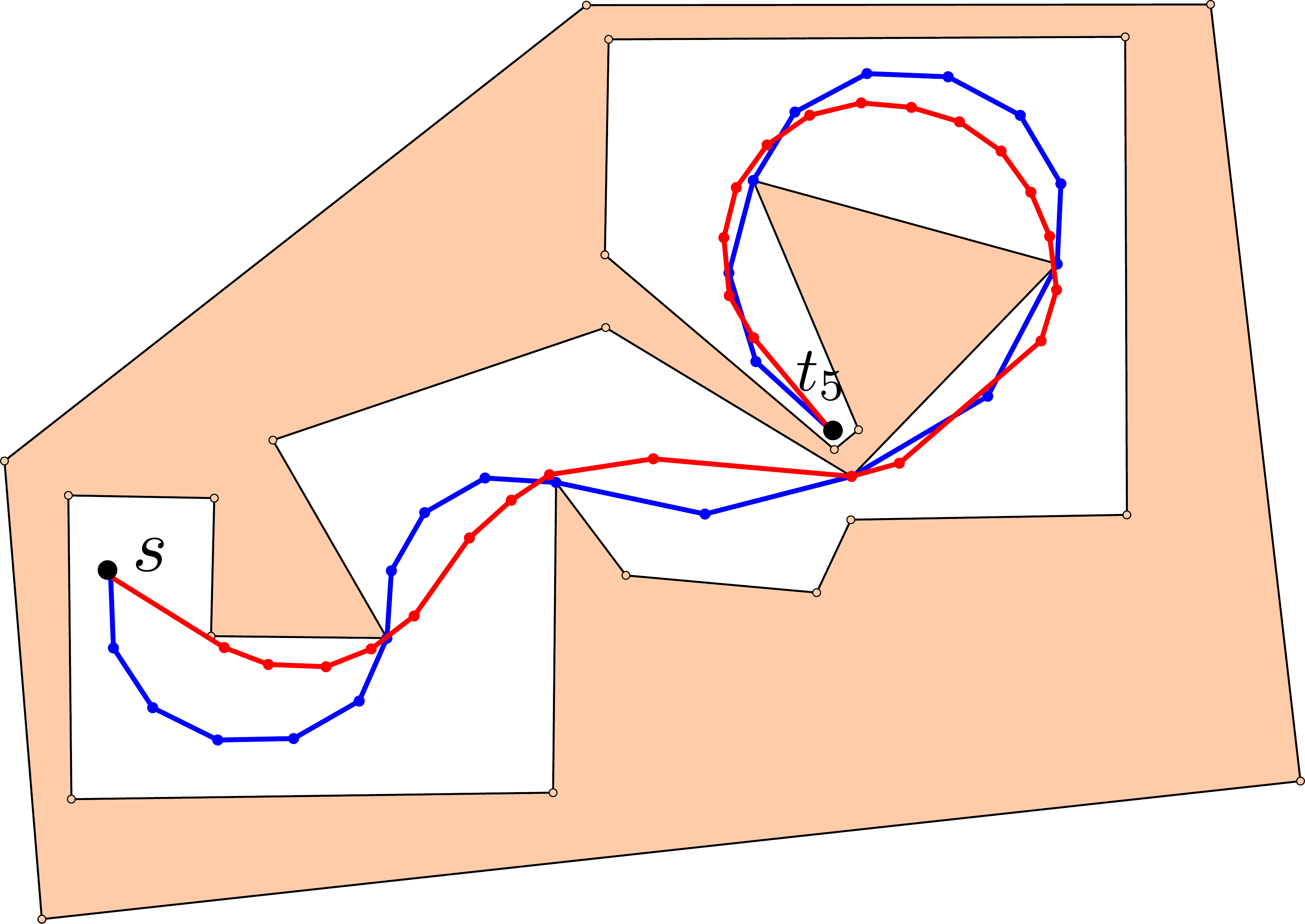}
    	\caption{}
    \end{subfigure}
	\caption{Demonstration of paths generated by the RCS (blue) and A* (red) algorithms from $s$ to $t_1,\dots, t_5$.}
	\label{fig_leaving_angle}
\end{figure*}

The numerical results are shown in Table~\ref{table_result1} and the paths generated by the RCS and A* algorithms for each target point are drawn in Figure~\ref{fig_leaving_angle}(a-e). In each figure, the blue path is produced by RCS and the red one by A*.

The first important fact derived from the results is that the running time of the A* algorithm is much larger than the RCS algorithm. As it can be seen in Table~\ref{table_result1}, in this simple test case, the running time of A* could be more than 10,000 times of the running time of RCS.

In the last column of Table~\ref{table_result1} we show the relative difference between the length of paths generated by the algorithms. The relative difference is defined as $100 \cdot |l_{\text{RCS}} - l_{\text{A*}}|/\max\{l_{\text{RCS}}, l_{\text{A*}}\}$. In this formula, $l_{\text{RCS}}$ and $l_{\text{A*}}$ are the length of the paths generated by RCS and A*, respectively. As it can be seen, the relative difference between the length of paths produced by the algorithms is small, and in this experiment is always below 5\%. Taking this into account and the fact that the running time of A* is so huge compared to RCS, the effectiveness of our method becomes apparent.

Another outcome of this experiment is that as the number of obstacle vertices around which the path must turn increases, the running time of A* increases too, whilst the running time of RCS is roughly constant. The increase in the running time of A* might also be due to the increase in the path length, but as we will see in the subsequent experiments, the impact of this increase is negligible here.

In the second experiment, we intend to see the effect of changing the maximum turning angle parameter on the 
performance of the algorithms. We used a fixed simple scene of size $400 \times 400$ consisting of a line segment as the only obstacle and two points as the starting and the target points. We chose $l = 50$ as the minimum leg length, and 4 different values for the maximum turning angle, namely $\alpha = 80, 40, 20, 10$.

The numerical results are shown in Table~\ref{table_result2} and the paths generated by RCS and A* for each value of $\alpha$ are drawn in Figure~\ref{fig_alpha}(a-d). As it can be seen from the table, the running time of the A* algorithm dramatically increases as the maximum turning angle is decreased. On the other hand, the running time of RCS is not much dependent on the value of $\alpha$.

\begin{table*}[!t]
	\caption{Illustration of the effect of changing $\alpha$ on the RCS and A* algorithms.}
	\label{table_result2}
	\centering \scriptsize
	\begin{tabular}{|C{1.2cm}|C{1.7cm}|C{1.5cm}|C{1.5cm}|C{1.5cm}|C{1cm}|C{1cm}|C{1.5cm}|}
		\hline
		{Max. turning angle (degree)} &
		{RCS\newline preprocessing time (ms)} &
		{RCS query time (ms)} &
		{RCS total time (ms)}  &
		{A* running time (ms)} &
		{RCS path length} &
		{A* path length} &
		{Relative difference} \\
		\hline
		80 & 0.20 & 0.44 & 0.64 &  119.8 & 466.1 & 477.6 & 2.4\%\\
		40 & 0.37 & 0.59 & 0.96 &  204.2 & 478.0 & 473.1 & 1.0\%\\
		20 & 0.16 & 0.68 & 0.84 & 1878.0 & 478.1 & 471.9 & 1.3\%\\
		10 & 0.18 & 0.48 & 0.66 & 5029.6 & 480.9 & 472.5 & 1.7\%\\
		\hline
	\end{tabular}
\end{table*}
	
\begin{figure*}
    \centering
	\begin{subfigure}[b]{.49\linewidth}
    \centering
		\fbox{\includegraphics[width=.5\linewidth]{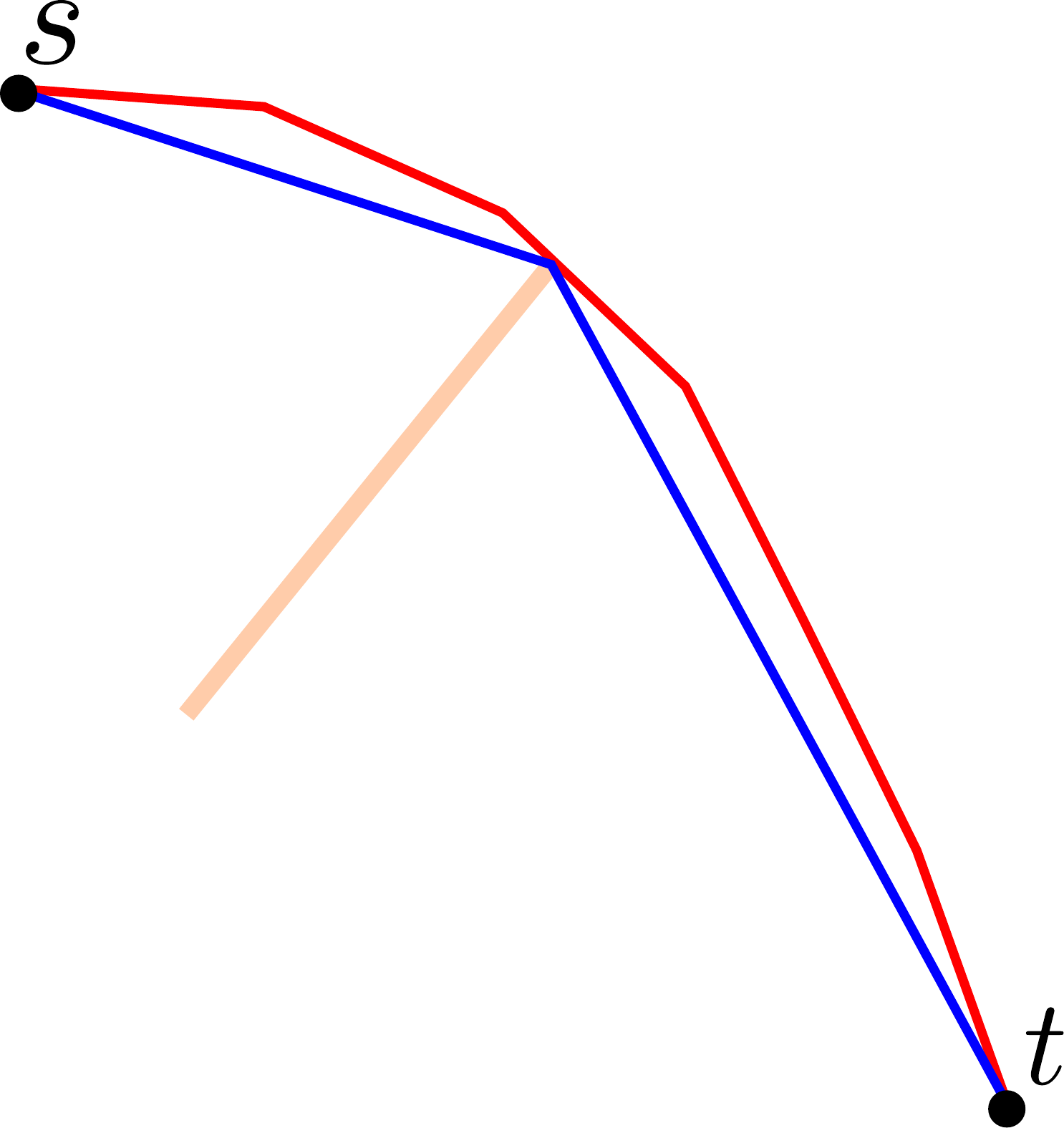}}
		\caption{$\alpha = 80^\circ$}
	\end{subfigure}
	\begin{subfigure}[b]{.49\linewidth}
		\centering
		\fbox{\includegraphics[width=.5\linewidth]{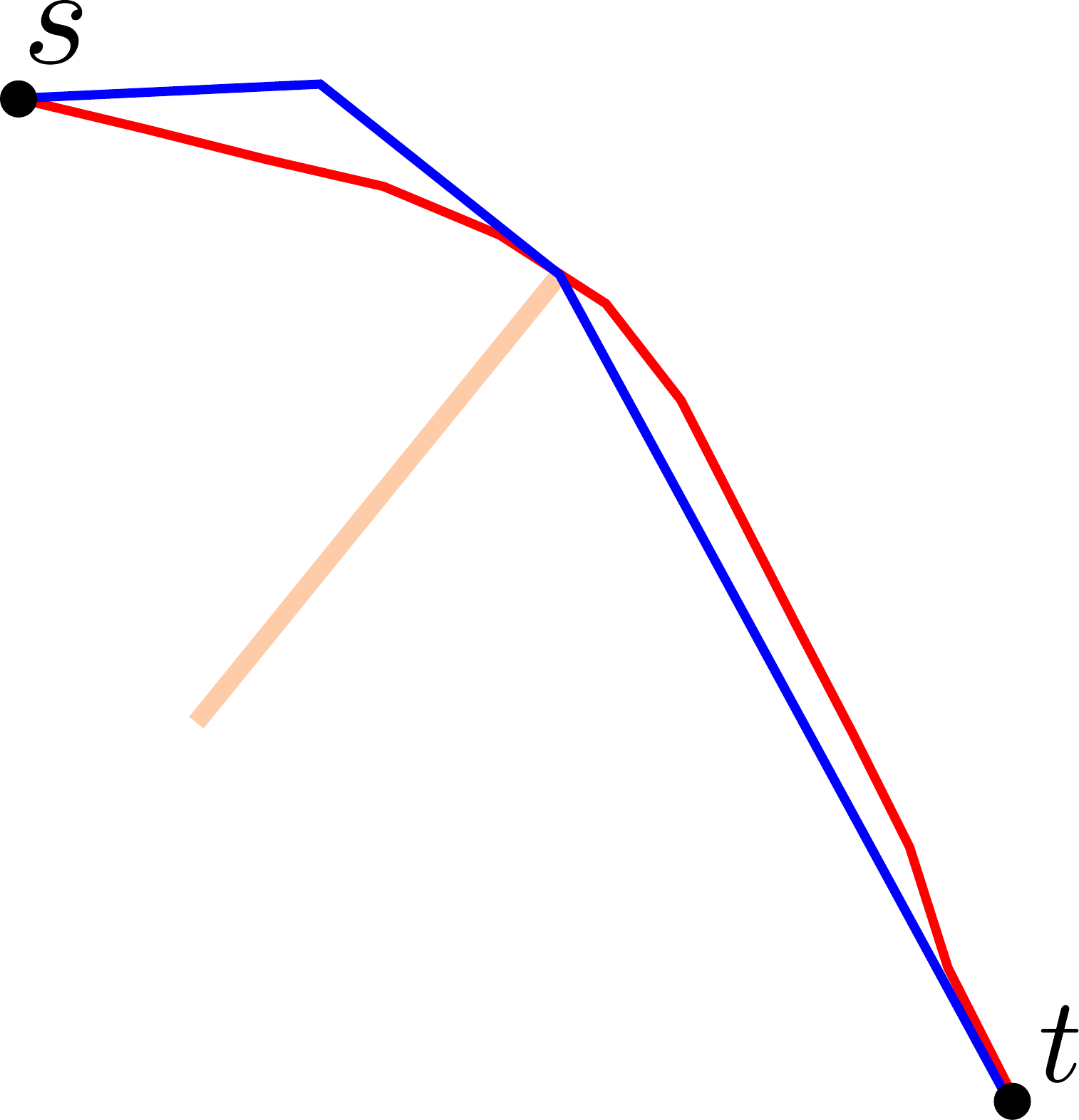}}
		\caption{$\alpha = 40^\circ$}
	\end{subfigure}

	\bigskip

	\begin{subfigure}[b]{.49\linewidth}
		\centering
		\fbox{\includegraphics[width=.5\linewidth]{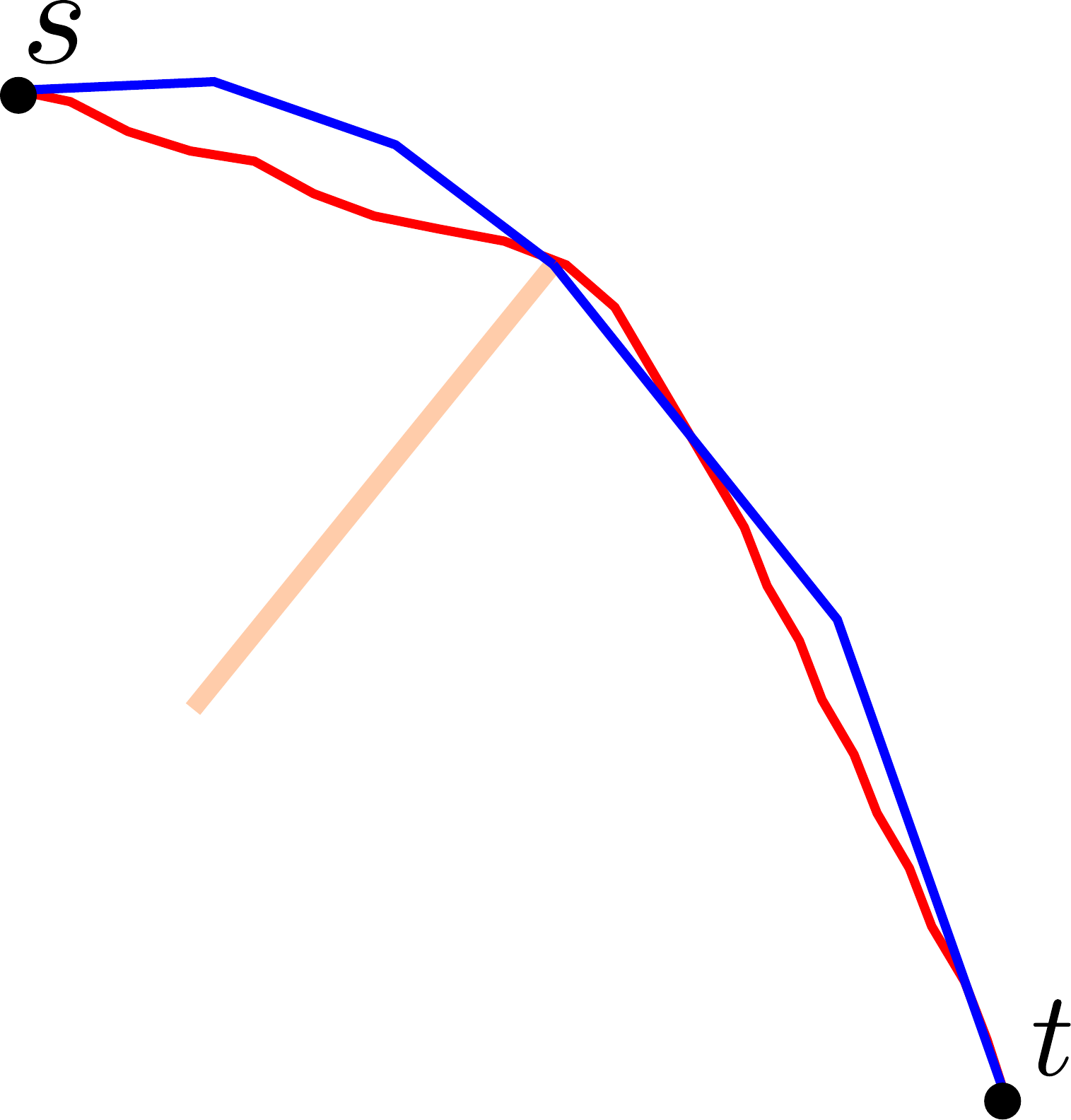}}
		\caption{$\alpha = 20^\circ$}
	\end{subfigure}
	\begin{subfigure}[b]{.49\linewidth}
		\centering
		\fbox{\includegraphics[width=.5\linewidth]{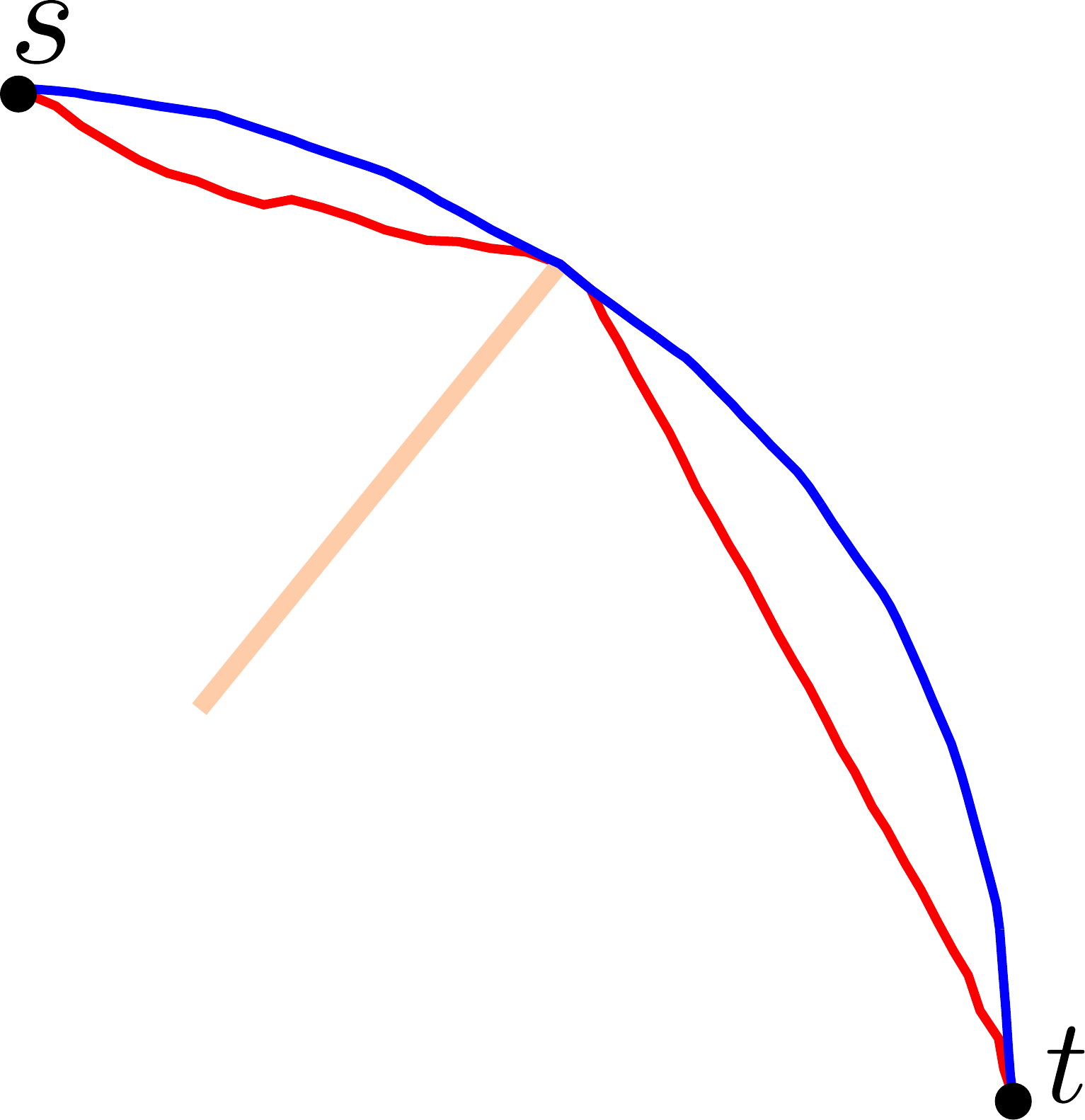}}
		\caption{$\alpha = 10^\circ$}
	\end{subfigure}
	\caption{
		Demonstration of paths generated by the RCS (blue) and A* (red) algorithms from $s$ to $t$ for different values of the maximum turning angle.}
	\label{fig_alpha}
\end{figure*}

We carry out the next experiment to see the effect of changing the minimum leg length parameter on the 
performance of the algorithms. We used a fixed simple scene of size $200 \times 200$ consisting of a line segment as the obstacle and two points as the starting and the target points. We chose $\alpha = \pi/6$ as the maximum turning angle, and 4 different values for the minimum leg length, namely $l = 40, 20, 10, 5$.

The numerical results are shown in Table~\ref{table_result3} and the paths generated by RCS and A* for each value of $l$ are drawn in Figure~\ref{fig_length}(a-d). As it can be seen from the figure, the paths generated by RCS are all the same, which is the shortest path with the given requirements. Furthermore, the running time of the A* algorithm substantially increases as the minimum leg length is decreased. However, the running time of RCS is not much dependent on the value of $\alpha$.

\begin{table*}[!t]
	\caption{Illustration of the effect of changing $l$ on the RCS and A* algorithms.}
	\label{table_result3}
	\centering \scriptsize
	\begin{tabular}{|C{1.2cm}|C{1.7cm}|C{1.5cm}|C{1.5cm}|C{1.5cm}|C{1cm}|C{1cm}|C{1.5cm}|}
		\hline
		{Min. leg length} &
		{RCS\newline preprocessing time (ms)} &
		{RCS query time (ms)} &
		{RCS total time (ms)}  &
		{A* running time (ms)} &
		{RCS path length} &
		{A* path length} &
		{Relative difference} \\
		\hline
		40 & 0.16 & 0.72 & 0.88 & 121.7 & 279.2 & 283.3 & 1.4\%\\
		20 & 0.20 & 0.75 & 0.95 & 217.8 & 279.2 & 282.0 & 1.0\%\\
		10 & 0.22 & 0.73 & 0.95 & 554.4 & 279.2 & 281.9 & 1.0\%\\
		 5 & 0.22 & 0.82 & 1.04 & 789.6 & 279.2 & 283.1 & 1.4\%\\
		\hline
	\end{tabular}
\end{table*}

\begin{figure*}
	\centering
	\begin{subfigure}[b]{.49\linewidth}
		\centering
		\fbox{\includegraphics[width=.5\linewidth]{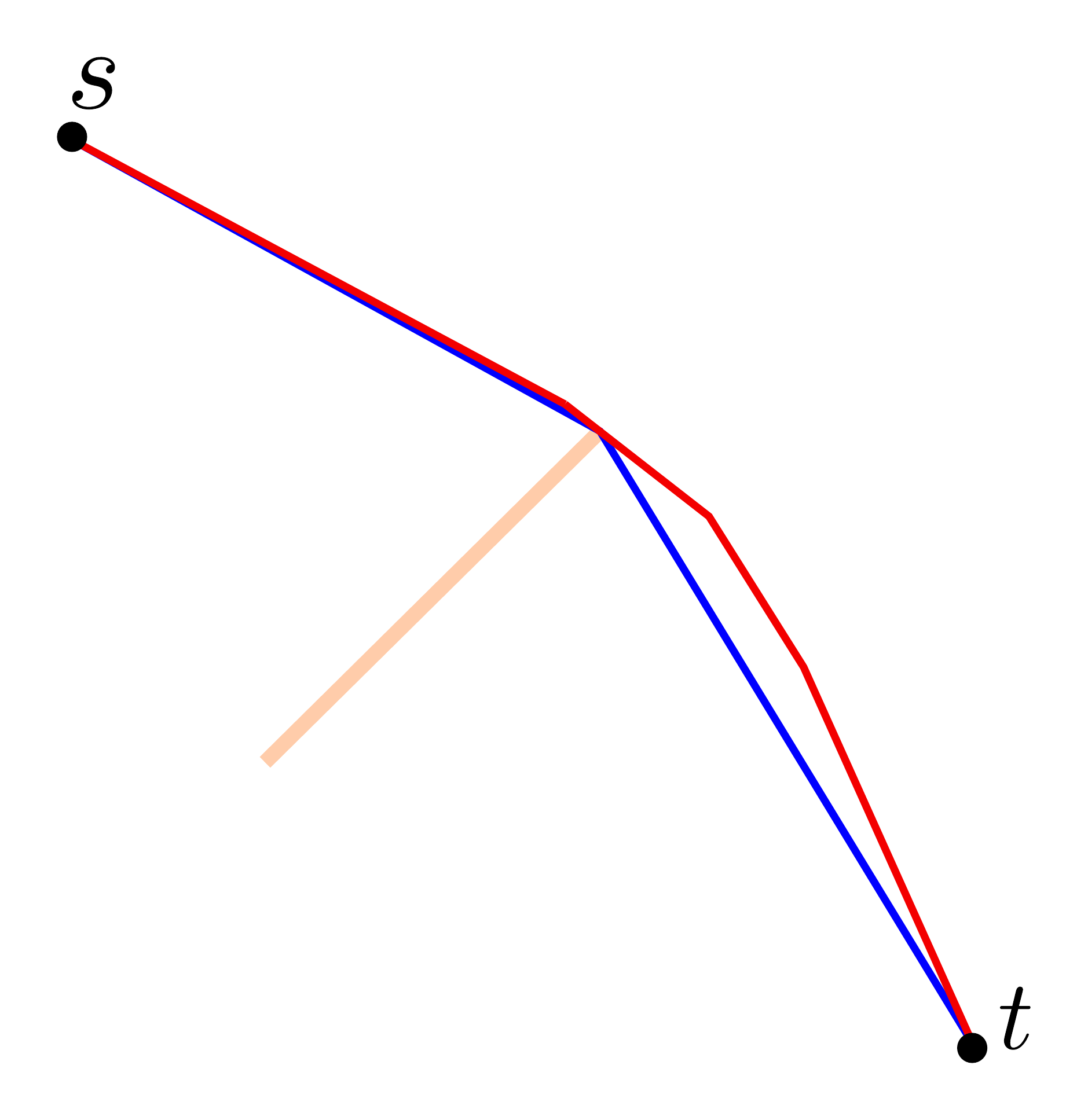}}
		\caption{$l=40$}
	\end{subfigure}
	\begin{subfigure}[b]{.49\linewidth}
		\centering
		\fbox{\includegraphics[width=.5\linewidth]{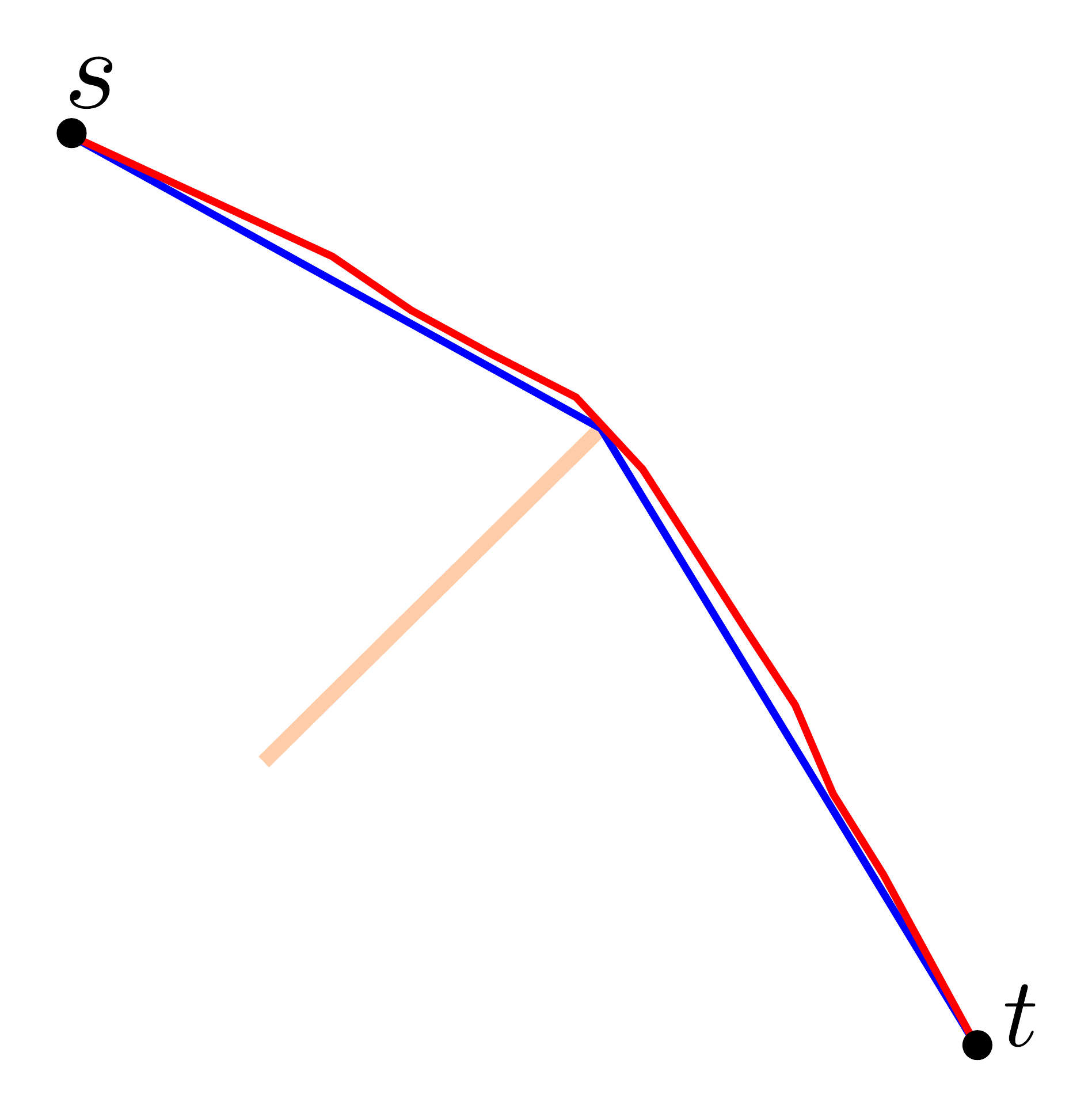}}
		\caption{$l=20$}
	\end{subfigure}
	
	\bigskip
	
	\begin{subfigure}[b]{.49\linewidth}
		\centering
		\fbox{\includegraphics[width=.5\linewidth]{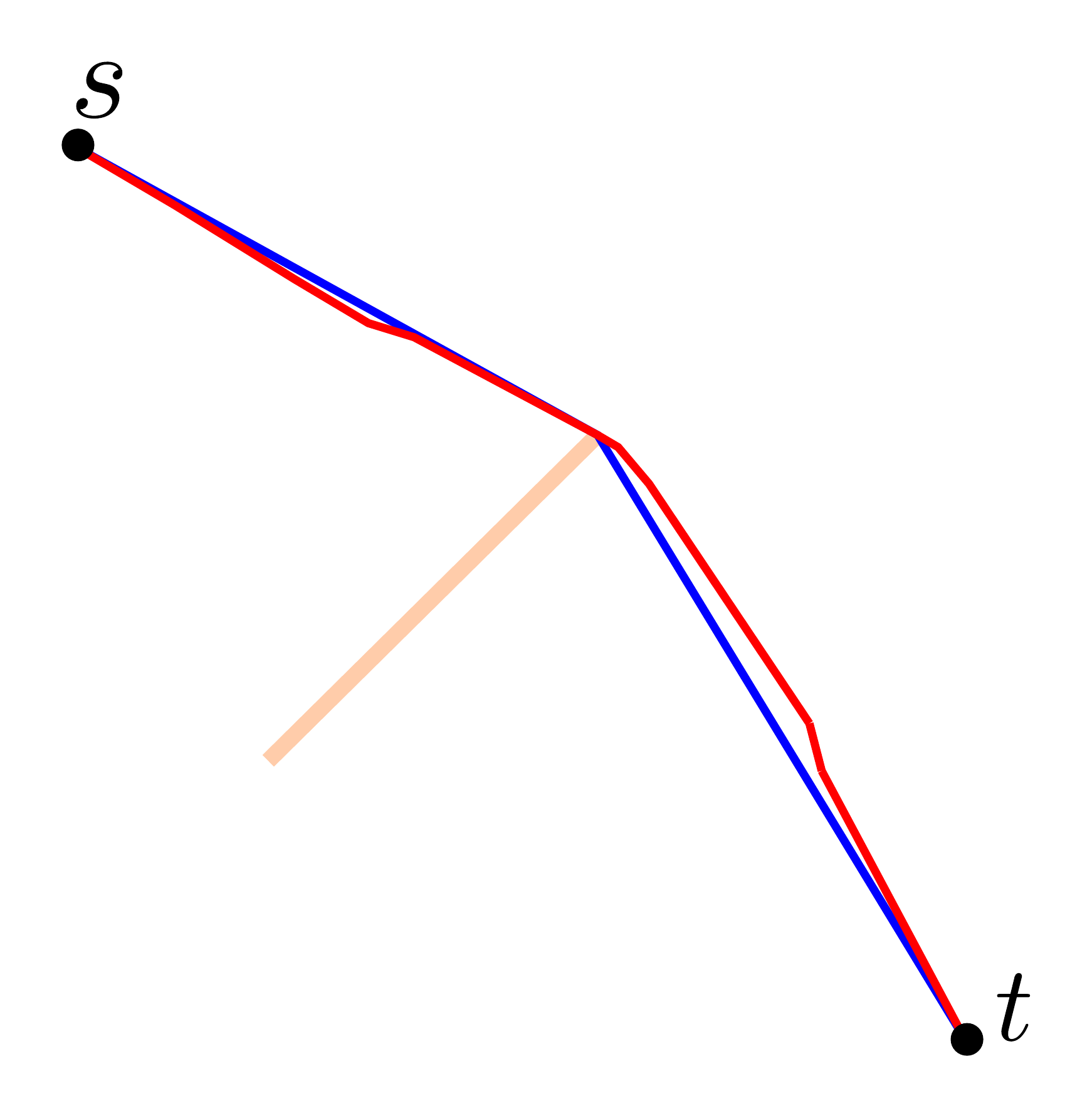}}
		\caption{$l=10$}
	\end{subfigure}
	\begin{subfigure}[b]{.49\linewidth}
		\centering
		\fbox{\includegraphics[width=.5\linewidth]{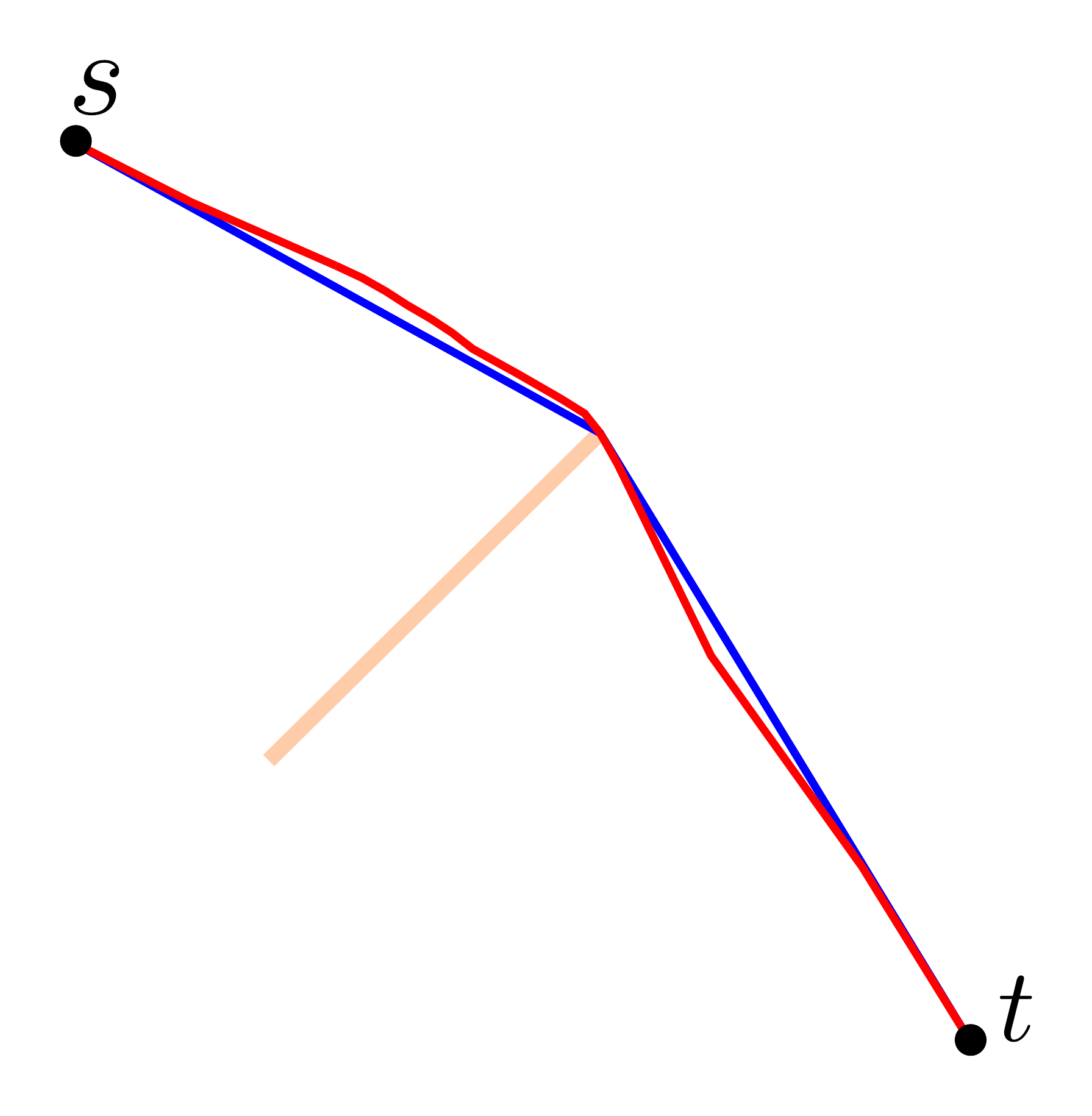}}
		\caption{$l=5$}
	\end{subfigure}
	\caption{
		Demonstration of paths generated by the RCS (blue) and A* (red) algorithms from $s$ to $t$ for different values of the minimum leg length.}
	\label{fig_length}
\end{figure*}

In the fourth experiment, we want to illustrate the effect of increasing the complexity of the scene in terms of the number of vertices of obstacles. In this experiment, each scene consists of a fixed staring and a target point, and a set of rod-shaped obstacles. We double the number of obstacles in each scene. The maximum turning angle is selected as $\alpha = \pi/6$, and the minimum leg length as $l = 50$.

The results are shown in Table~\ref{table_result4} and Figure~\ref{fig_Obstacles}(a-e). From the results of the table, we see that the dependency of the running time of RCS is much greater to the complexity of the scene than A*. Despite that, the running time of RCS is still less than the running time of A*. The relative difference of the two algorithms are in most cases very small, and in three out of five cases, RCS found a path shorter than the path found by A*.

\begin{table*}[!t]
	\caption{Illustration of the effect of increasing the complexity of scene.}
	\label{table_result4}
	\centering \scriptsize
	\begin{tabular}{|C{0.6cm}|C{1.4cm}|C{1.7cm}|C{1.5cm}|C{1.5cm}|C{1.5cm}|C{1cm}|C{1cm}|C{1.3cm}|}
		\hline
		Test Case &
		\#Obstacle &
		{RCS\newline preprocessing time (ms)} &
		{RCS query time (ms)} &
		{RCS total time (ms)}  &
		{A* running time (ms)} &
		{RCS path length} &
		{A* path length} &
		{Relative difference} \\
		\hline
		$t_1$ & 2  & 1.18   & 0.94 & 2.12   & 188.3  & 643.1 & 645.7 & 0.4\%\\
		$t_2$ & 4  & 6.56   & 1.18 & 7.74   & 149.6  & 643.2 & 645.8 & 0.4\%\\
		$t_3$ & 8  & 14.45  & 1.31 & 15.76  & 449.9  & 664.9 & 663.4 & 0.2\%\\
		$t_4$ & 16 & 33.08  & 2.07 & 35.15  & 318.8  & 747.3 & 669.1 & 10.5\%\\
		$t_5$ & 32 & 209.36 & 4.00 & 213.36 & 762.0  & 740.5 & 756.1 & 2.1\%\\
		\hline
	\end{tabular}
\end{table*}

\begin{figure*}
	\centering
	\begin{subfigure}[b]{.49\linewidth}
		\centering
		\fbox{\includegraphics[width=.5\linewidth]{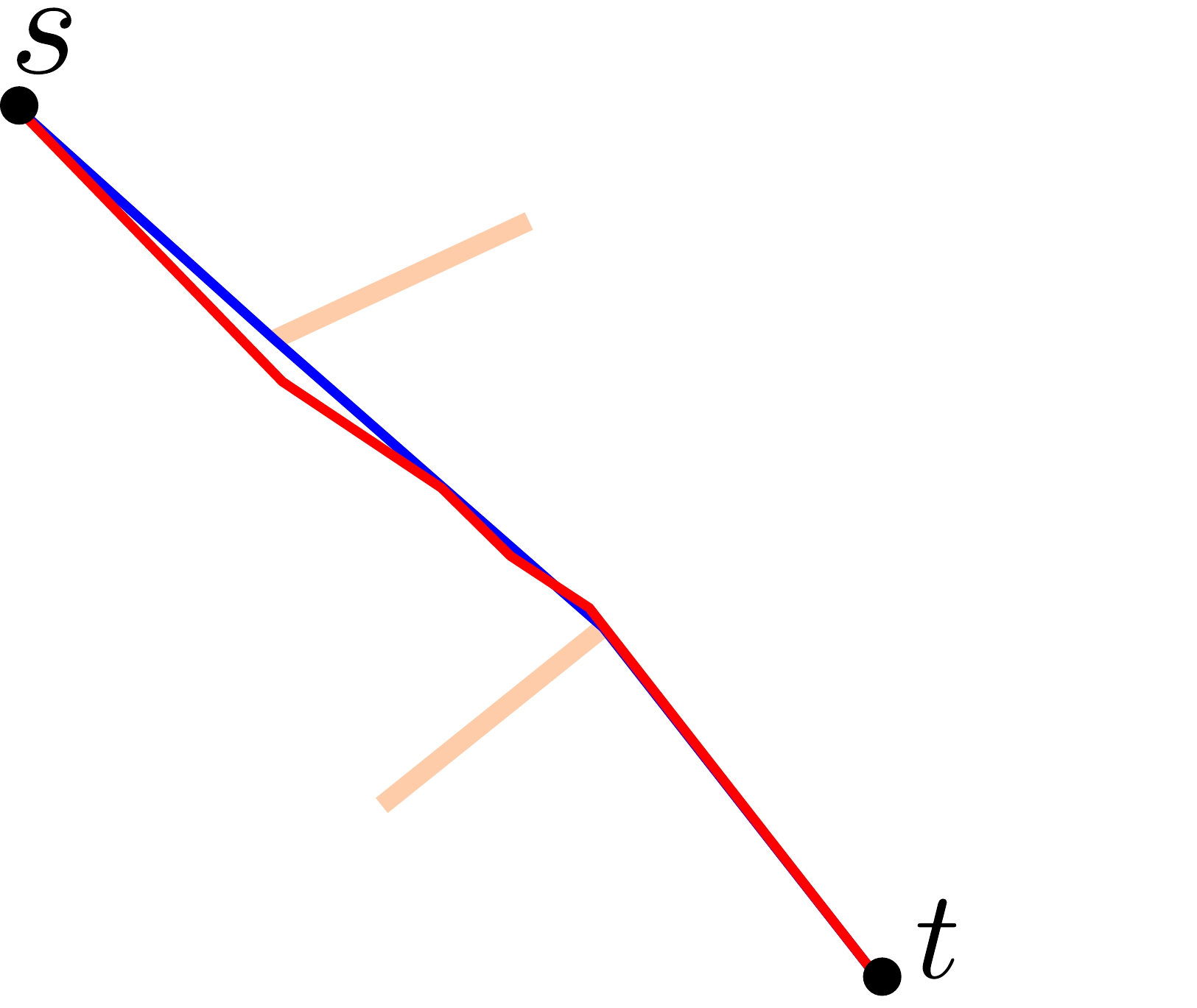}}
		\caption{$t_1$}
	\end{subfigure}
	\begin{subfigure}[b]{.49\linewidth}
		\centering
		\fbox{\includegraphics[width=.5\linewidth]{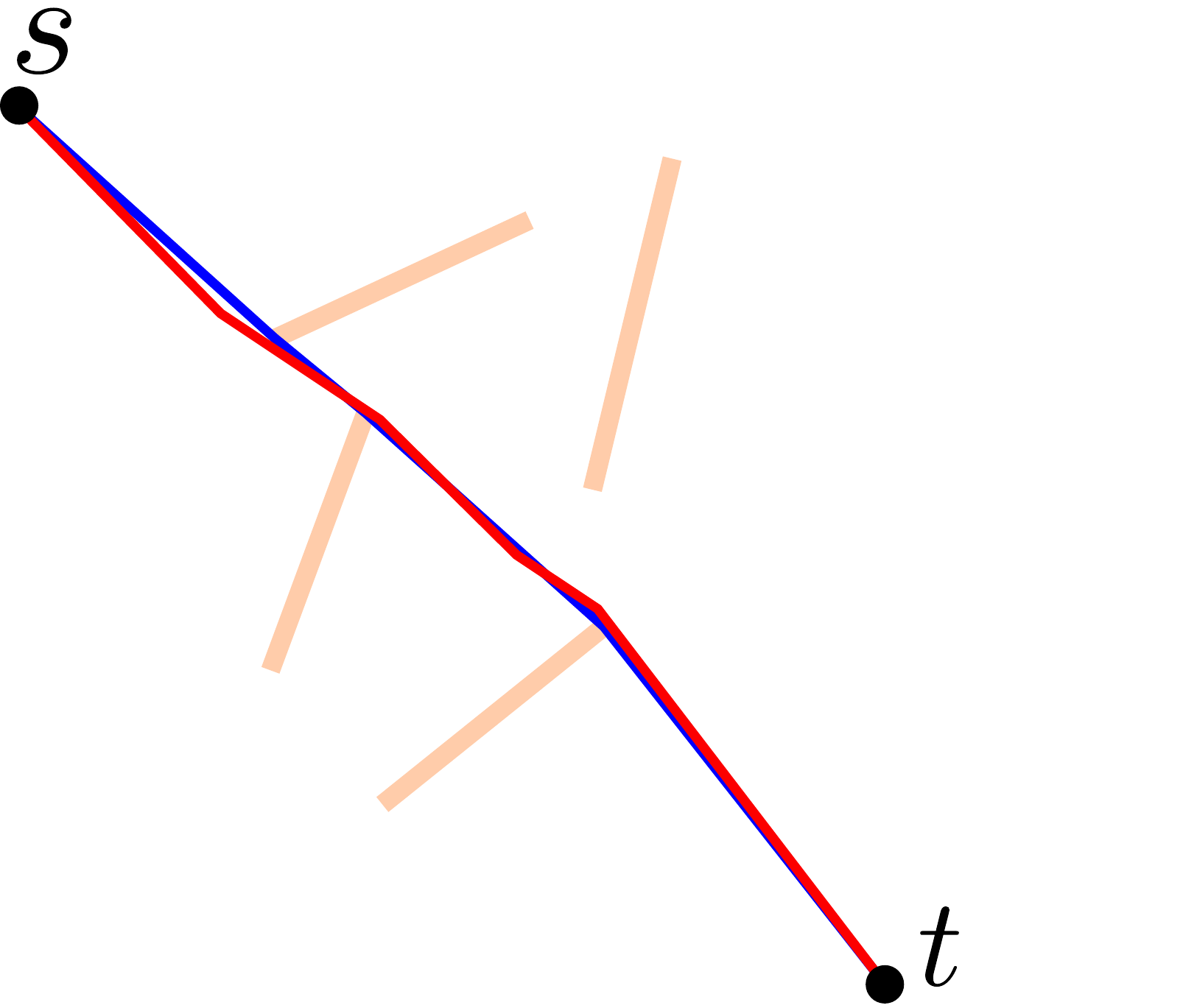}}
		\caption{$t_2$}
	\end{subfigure}
	
	\bigskip
	
	\begin{subfigure}[b]{.49\linewidth}
		\centering
		\fbox{\includegraphics[width=.5\linewidth]{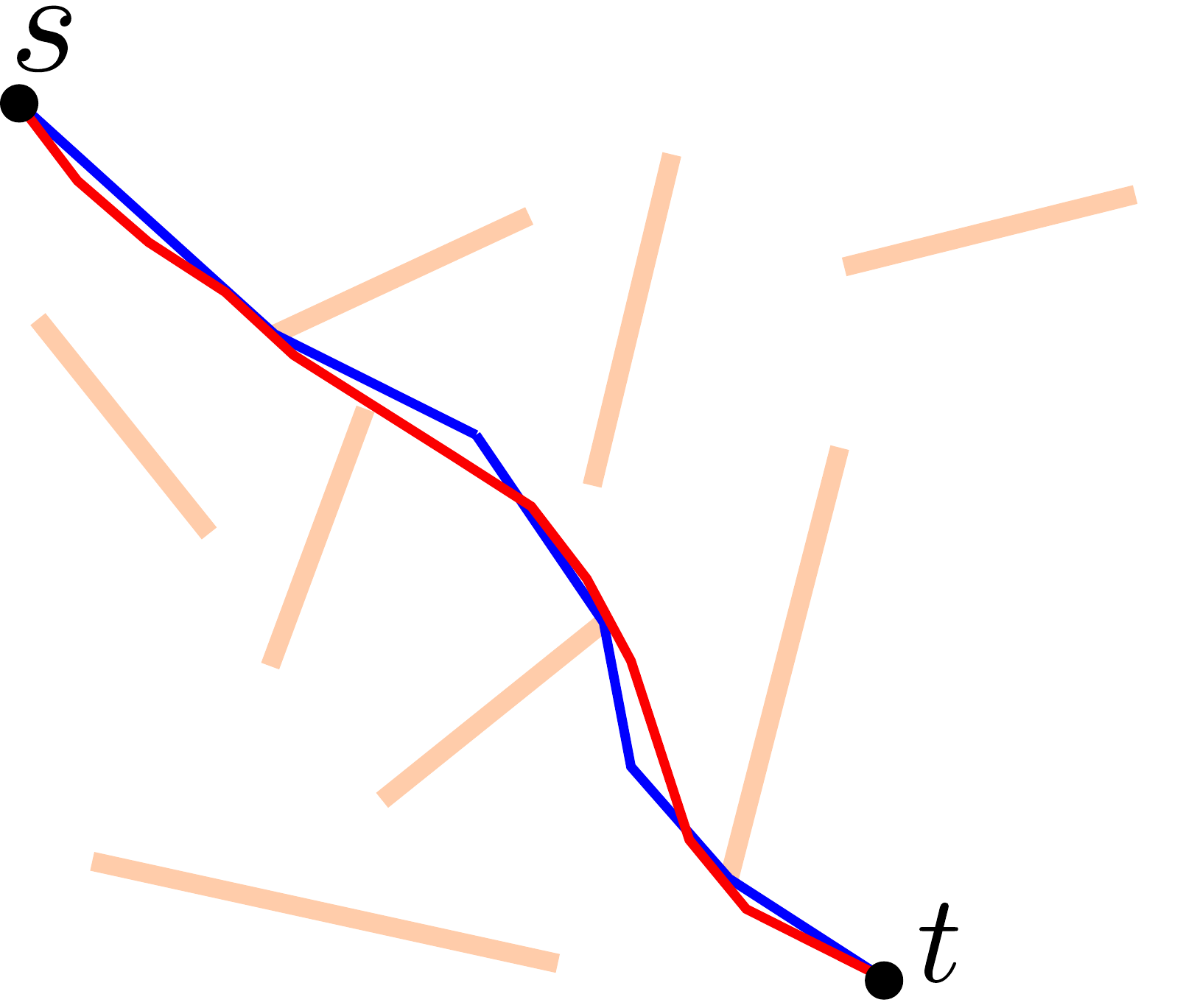}}
		\caption{$t_3$}
	\end{subfigure}
	\begin{subfigure}[b]{.49\linewidth}
		\centering
		\fbox{\includegraphics[width=.5\linewidth]{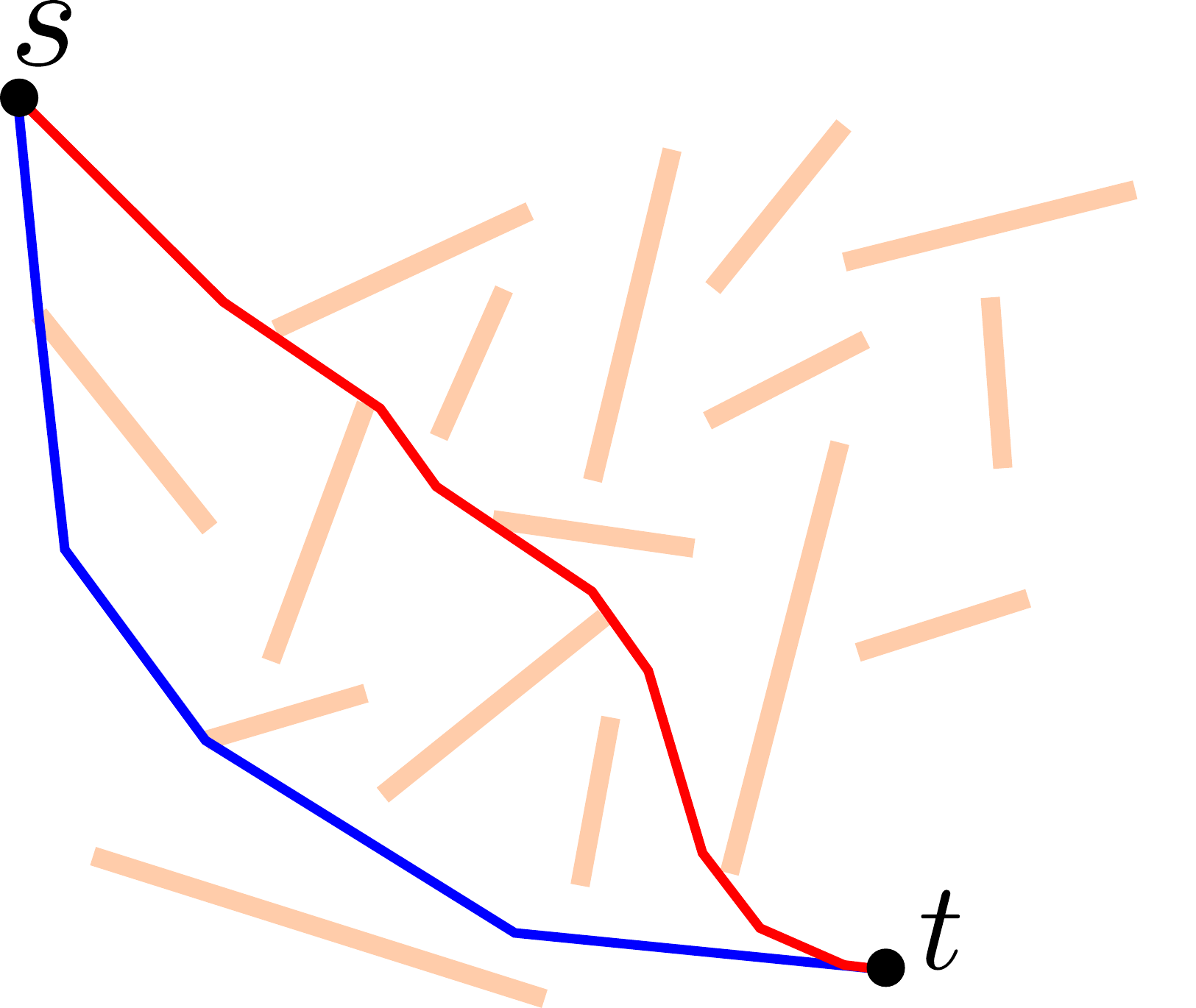}}
		\caption{$t_4$}
	\end{subfigure}

\bigskip

	\begin{subfigure}[b]{.49\linewidth}
		\centering
		\fbox{\includegraphics[width=.5\linewidth]{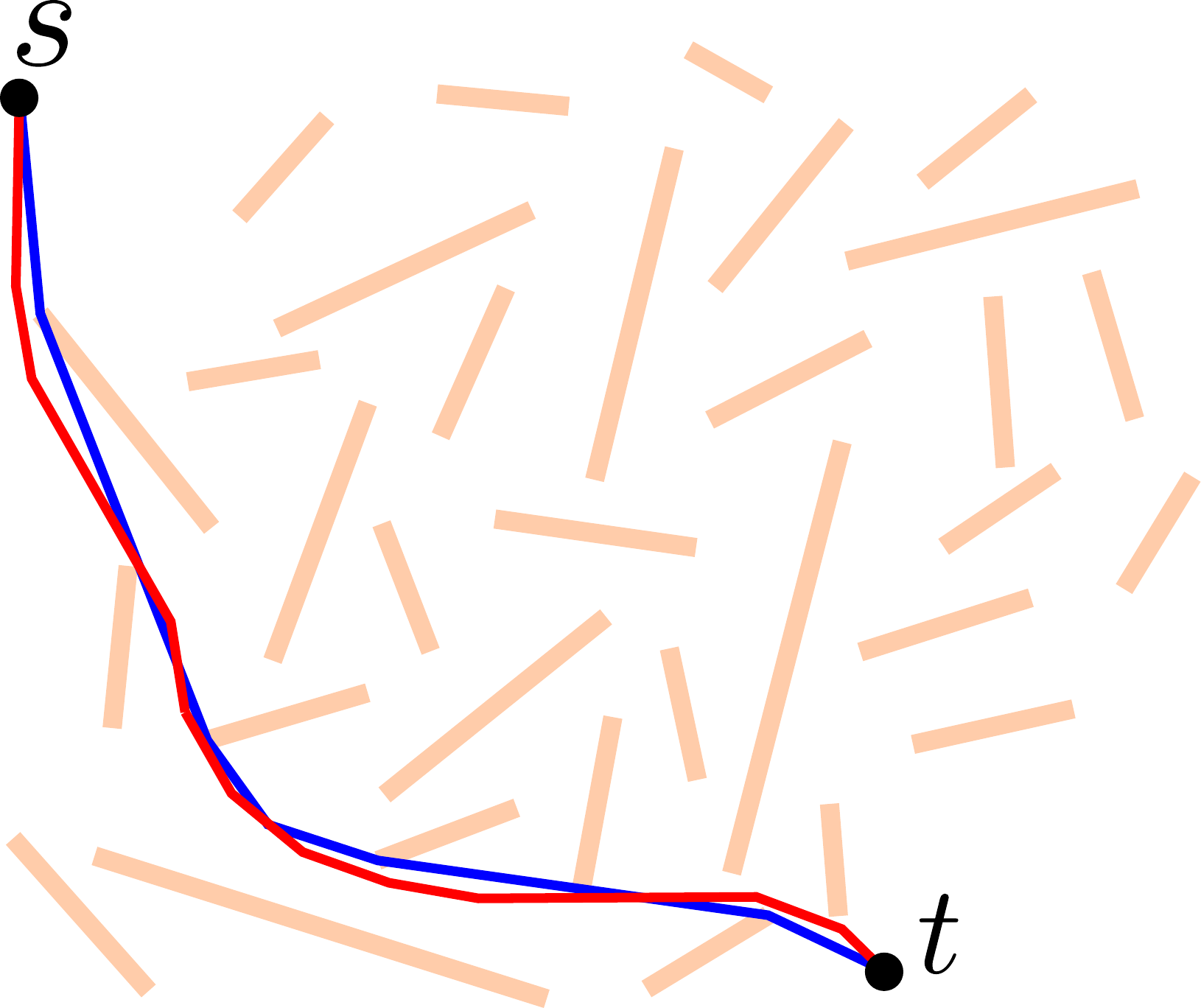}}
		\caption{$t_5$}
	\end{subfigure}
	\caption{
		Demonstration of paths generated by the RCS (blue) and A* (red) algorithms from $s$ to $t$ for scenes of different complexities.}
	\label{fig_Obstacles}
\end{figure*}

In the fifth experiment, we show the effect of rescaling the workspace on the running time of the two algorithms. As illustrated in Figure~\ref{fig_Ratio} the scene consists of a fixed starting and a target point and a single rod-shaped obstacle. In each test case, we rescale the scene by doubling each coordinate, as well as the minimum leg length $l$. The maximum turning angle is selected as $\alpha = \pi/6$. It is obvious that with rescaling the scene, and the minimum leg length parameter, the path will not change.

The results are shown in Table~\ref{table_result5}. From these results we can easily conclude that the running time of A* is greatly dependent on the size of the scene, while the running time of RCS is independent.

\begin{table*}[!t]
	\caption{Illustration of the effect of changing size of the scene.}
	\label{table_result5}
	\centering \scriptsize
	\begin{tabular}{|C{0.6cm}|C{1.2cm}|C{1.3cm}|C{1.3cm}|C{1.3cm}|C{1.3cm}|C{1.5cm}|C{1cm}|C{1cm}|C{1.3cm}|}
		\hline
		{Test Case} &
		{Scene size} &
		{Minimum leg length} &
		{RCS\newline prep. time (ms)} &
		{RCS query time (ms)} &
		{RCS total time (ms)}  &
		{A* running time (ms)} &
		{RCS path length} &
		{A* path length} &
		{Relative difference} \\
		\hline
		$t_1$ & $100 \times 100$ & 10 & 0.22 & 0.77 & 0.99  & 182.2  & 141.6  & 143.4  & 1.26\%\\
		$t_2$ & $200 \times 200$ & 20 & 0.27 & 0.79 & 1.06  & 233.2  & 284.7  & 287.6  & 1.01\%\\
		$t_3$ & $400 \times 400$ & 40 & 0.23 & 0.80 & 1.03  & 455.0  & 570.8  & 574.5  & 0.64\%\\
		$t_4$ & $800 \times 800$ & 80 & 0.22 & 0.80 & 1.02  & 1178.9 & 1143.0 & 1148.8 & 0.50\%\\
		\hline
	\end{tabular}
\end{table*}

\begin{figure*}
	\centering
	\begin{subfigure}[b]{.49\linewidth}
		\centering
		\fbox{\includegraphics[width=.2\linewidth]{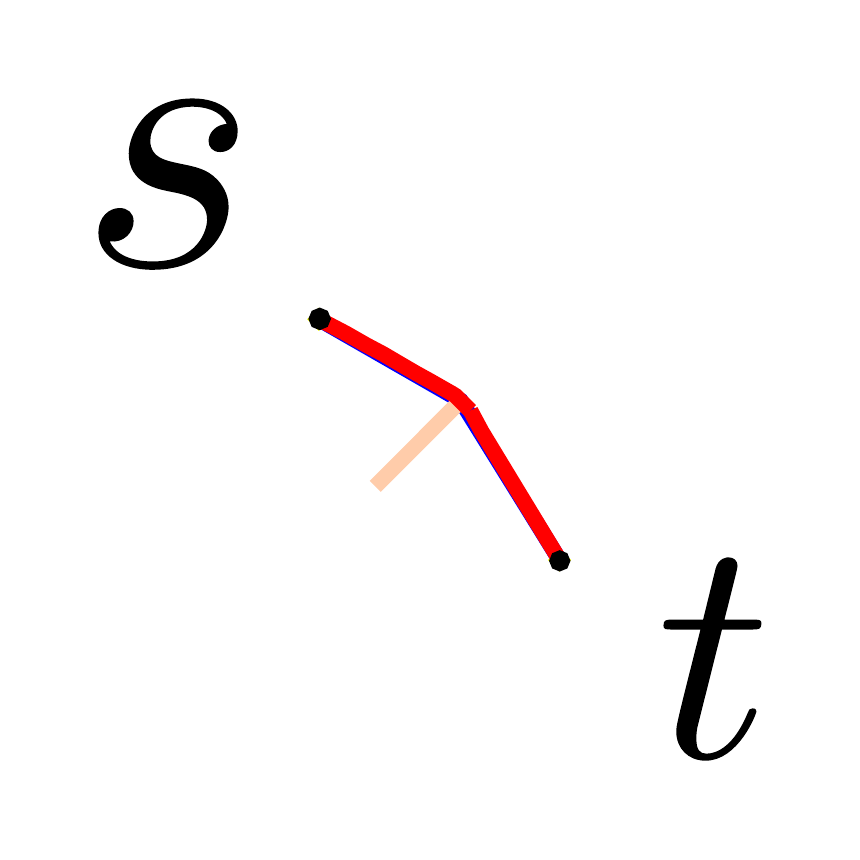}}
		\caption{Scene of size $100 \times 100$.}
	\end{subfigure}
	\begin{subfigure}[b]{.49\linewidth}
		\centering
		\fbox{\includegraphics[width=.3\linewidth]{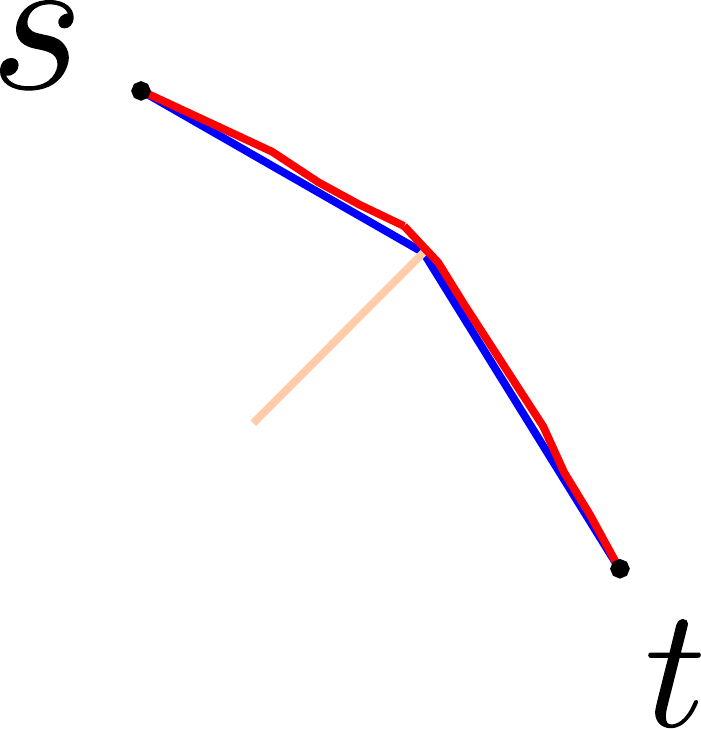}}
		\caption{Scene of size $200 \times 200$.}
	\end{subfigure}
	
	\bigskip
	
	\begin{subfigure}[b]{.49\linewidth}
		\centering
		\fbox{\includegraphics[width=.5\linewidth]{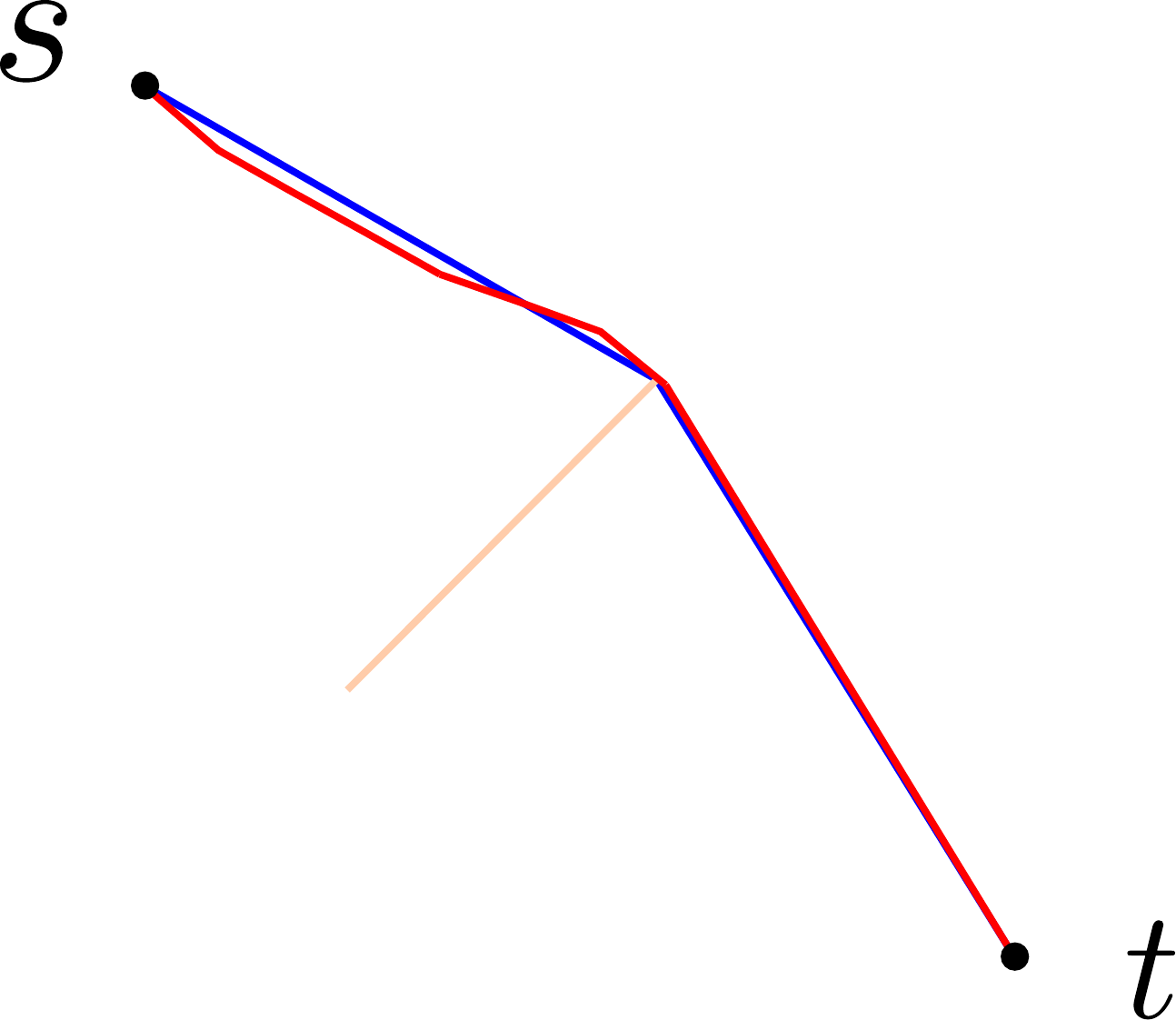}}
		\caption{Scene of size $400 \times 400$.}
	\end{subfigure}
	\begin{subfigure}[b]{.49\linewidth}
		\centering
		\fbox{\includegraphics[width=.9\linewidth]{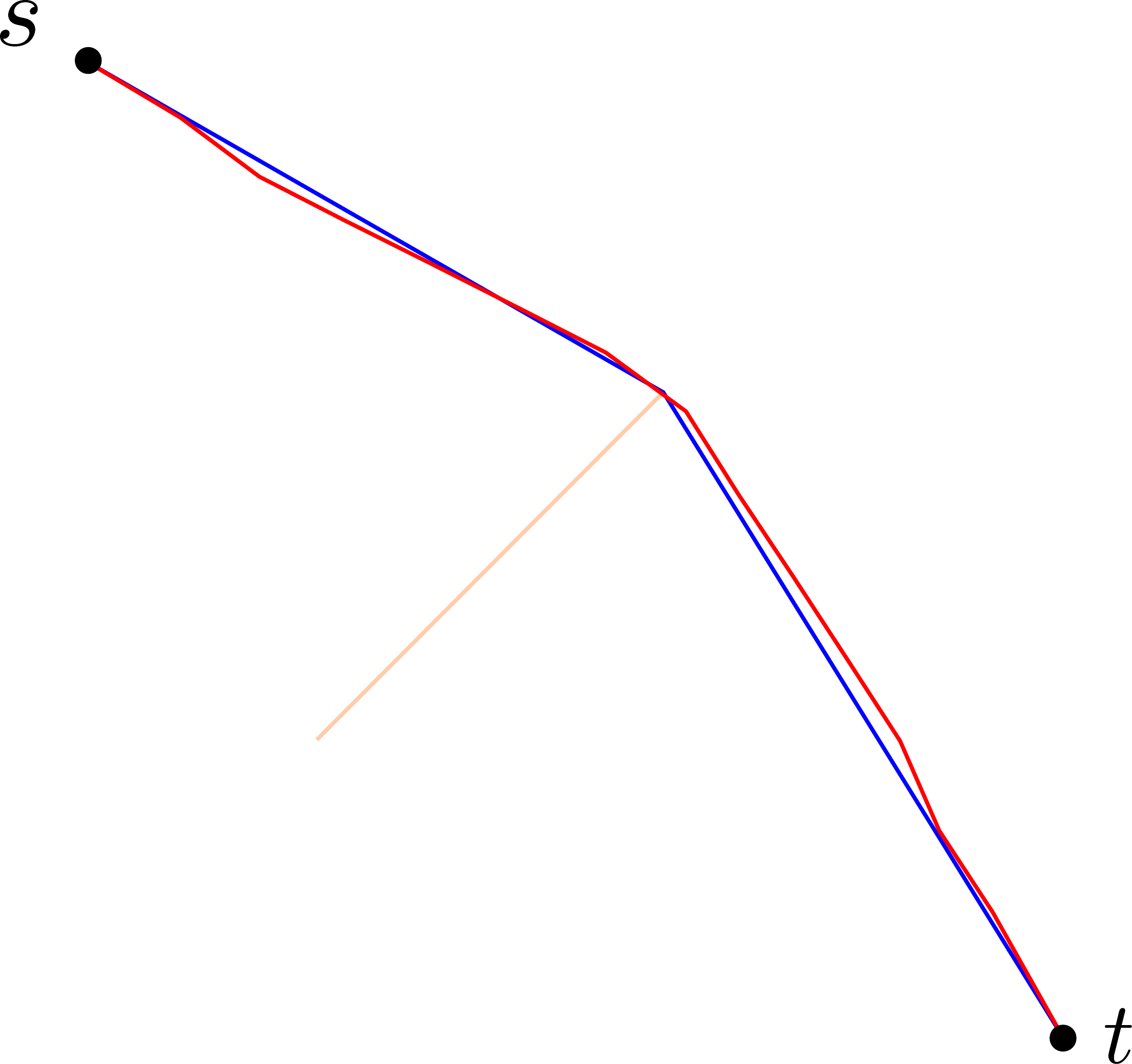}}
		\caption{Scene of size $800 \times 800$.}
	\end{subfigure}
	\caption{Demonstration of paths generated by the RCS (blue) and A* (red) algorithms from $s$ to $t$ for scenes of different sizes.}
	\label{fig_Ratio}
\end{figure*}

In the sixth experiment, we demonstrate the effect of path length on the running time of the algorithms. In this experiment, we use a simple scene consisting of only a starting point $s$ and a target point $t$. The difference between each test case is the distance between the starting and the target points. In this experiment we choose $l = 50$, and $\alpha = \pi/6$. It is obvious that in this experiment the optimum path is simply the line segment from $s$ to $t$.

The results of this experiment are shown in Table~\ref{table_result6}. From these results, it can easily be confirmed that the running time of the two algorithms is not dependent on the path length.

\begin{table*}[!t]
	\caption{Illustration of the effect of the path length on the running time of RCS and A*.}
	\label{table_result6}
	\centering \scriptsize
	\begin{tabular}{|C{1.5cm}|C{2.5cm}|C{1.5cm}|C{1.7cm}|C{1.7cm}|}
		\hline
		{Path length} &
		{RCS preprocessing time (ms)} &
		{RCS query time (ms)} &
		{RCS total time (ms)}  &
		{A* running time (ms)} \\
		\hline
		100 & 0.0046 & 0.0159 & 0.0205 &  94.19  \\
		200 & 0.0046 & 0.0162 & 0.0208 & 104.21  \\
		400 & 0.0046 & 0.0174 & 0.0220 & 100.21  \\
		800 & 0.0046 & 0.0179 & 0.0225 &  95.73  \\
		\hline
	\end{tabular}
\end{table*}

For the last experiment, we want to test the scalability of the RCS algorithm. We use six different polygonal domains of various sizes, and run the algorithm to find the required preprocessing and query time of the algorithm for each one. In these test cases, we choose the maximum turning angle $\alpha = \pi / 9$ and the minimum leg length $l = 60$. For each test case, the number of obstacles and vertices, and the preprocessing and query time in milliseconds are shown in Table~\ref{table_result7}.

One of the configurations used in this experiment is depicted in Figure~\ref{testcase3}. In this figure, a polygonal domain with 9 obstacles and 39 vertices is illustrated. The result of the algorithm on this configuration is shown when $\alpha=\pi/18$ and $l=60$.

\begin{table*}[!t]
	\renewcommand{\arraystretch}{1.3}
	\caption{Preprocessing and query time of the RCS algorithm for scenes of various complexities.}
	\label{table_result7}
	\centering
	\begin{tabular}{|c|c|c|c|c|}
		\hline
		Test case & \#Obstacles & \#Vertices & Preprocessing time (ms) & Query time (ms) \\
		\hline
		1 & 1 & 3 & 2 & 3 \\ 
		2 & 2 & 7 & 25 & 3 \\ 
		3 & 9 & 39 & 1166 & 18 \\ 
		4 & 26 & 100 & 2033 & 26 \\ 
		5 & 84 & 300 & 28860 & 188 \\ 
		6 & 180 & 500 & 223134 & 570 \\ 
		\hline
	\end{tabular}
\end{table*}

\begin{figure}[!t]
	\centering
	\fbox{\includegraphics[width=2.5in]{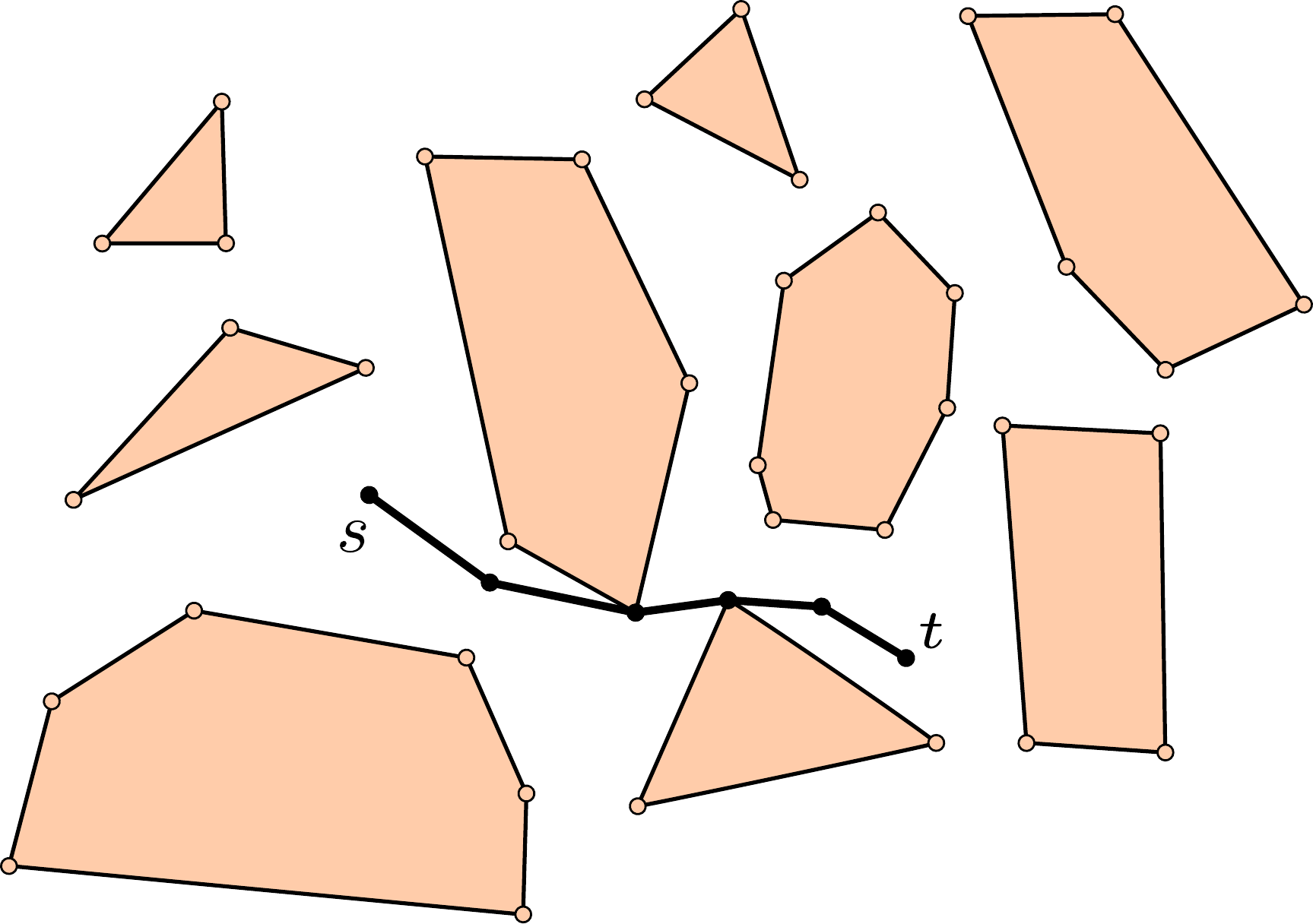}}
	\caption{A polygonal domain with 9 obstacles and 39 vertices is illustrated in which we find a path with $\alpha=\pi/18$ and $l=60$.}
	\label{testcase3}
\end{figure}

\section{Conclusion} \label{Conclusion}
In this paper we studied the problem of path planning for a robot with maximum turning angle and minimum leg length in a polygonal domain. We first proposed an algorithm to find a path avoiding obstacles with the aforementioned requirements. We then proved that the required space and the running time of the algorithm are $O(n^2)$ and $O(n^2 \log n)$, respectively. In order to decrease the path planning time, we further decomposed the algorithm into two parts, the preprocessing phase and the query phase.  This helps to reduce the running time of the algorithm when a set of target points are going to be used in a fixed polygonal domain. In another extension, we changed the algorithm to answer path planning queries that require the object to approach the target from a given direction (or a range of directions). 

Although our main goal was not to find the shortest path, as presented by the experiments, the solution is not much longer than the shortest path. Furthermore, comparing to most of the previous work on this problem, which uses heuristic and evolutionary methods, the running time of the algorithm is much less, and the algorithm is greatly scalable to large complex environments. For future work, we try to improve the algorithm and find the shortest path with the given requirements.

\bibliographystyle{plain}
\bibliography{planning}
\end{document}